\documentclass{article}
\usepackage{graphicx}


\usepackage{amsmath}
\usepackage{amsthm}
\usepackage{amssymb, dsfont}
\usepackage{natbib}
\usepackage{algorithm}
\usepackage{algpseudocode}
\usepackage{multirow}
\usepackage[frozencache,cachedir=minted-cache]{minted}
\usepackage[most]{tcolorbox}
\usepackage{minted}
\usemintedstyle{colorful}
\usepackage{xcolor}
\usepackage{wrapfig}
\usepackage{booktabs}
\usepackage{lipsum} 
\usepackage{comment}

\newtheorem{theorem}{Theorem}
\newtheorem{lemma}{Lemma}
\newtheorem{definition}{Definition}
\newtheorem{assumption}{Assumption}


\usepackage[final]{neurips_2025}


\usepackage[utf8]{inputenc} 
\usepackage[T1]{fontenc}    
\usepackage{hyperref}       
\usepackage{url}            
\usepackage{booktabs}       
\usepackage{amsfonts}       
\usepackage{nicefrac}       
\usepackage{microtype}      
\usepackage{xcolor}         
\usepackage{amsthm}         

\title{Top-H Decoding: Adapting the Creativity and Coherence with Bounded Entropy in Text Generation}

%

\author{Erfan Baghaei Potraghloo$^u$$\dagger$, Seyedarmin Azizi$^u$$\dagger$,
Souvik Kundu$^i$, and Massoud Pedram$^u$ \vspace{3mm}\\
$^u$University of Southern California,
Los Angeles, USA \\
$^i$Intel AI, USA\\
$\dagger$ Equal contribution authors\\
\texttt{\{baghaeip, seyedarm, pedram\}@usc.edu},  \texttt{souvikk.kundu@intel.com}\\
}

\begin{document}

\maketitle

\begin{abstract}
Large language models (LLMs), despite their impressive performance across a wide range of tasks, often struggle to balance two competing objectives in open-ended text generation: fostering diversity and creativity while preserving logical coherence. Existing truncated sampling techniques, including temperature scaling, top-$p$ (nucleus) sampling, and min-$p$ sampling, aim to manage this trade-off. However, they exhibit limitations, particularly in the effective incorporation of the confidence of the model into the corresponding sampling strategy. For example, min-$p$ sampling relies on a single top token as a heuristic for confidence, eventually underutilizing the information of the probability distribution. To effectively incorporate the model confidence, this paper presents \textbf{\textit{top-H}} decoding. We first establish the theoretical foundation of the interplay between creativity and coherence in truncated sampling by formulating an \textbf{entropy-constrained minimum divergence} problem. We then prove this minimization problem to be equivalent to an \textbf{entropy-constrained mass maximization} (ECMM) problem, which is NP-hard. Finally, we present top-H decoding, a computationally efficient greedy algorithm to solve the ECMM problem. Extensive empirical evaluations demonstrate that top-H outperforms the state-of-the-art (SoTA) alternative of min-$p$ sampling by up to $\mathbf{25.63}\%$ on creative writing benchmarks, while maintaining robustness on question-answering datasets such as GPQA, GSM8K, and MT-Bench. Additionally, an \textit{LLM-as-judge} evaluation confirms that top-H indeed produces coherent outputs even at higher temperatures, where creativity is especially critical. In summary, top-H advances SoTA in open-ended text generation and can be \textit{easily integrated} into creative writing applications. The code is available at \href{https://github.com/ErfanBaghaei/Top-H-Decoding}{https://github.com/ErfanBaghaei/Top-H-Decoding}.

\end{abstract}

\section{Introduction}

Large language models (LLMs) have exhibited impressive abilities in open-ended generation tasks, including creative writing and multi-turn dialogue \citep{lee2022coauthor}. However, these models often need to deal with the challenge of \emph{balancing creativity and coherence}, accepting less likely and more imaginative token choices while avoiding incoherent or nonsensical output. This trade-off is complex, as indiscriminate broadening of the sampling pool can lead to fragmented or disjoint text \citep{holtzman2019curious}.

To navigate this balance, various sampling strategies have emerged, including {temperature scaling}~\citep{ackley1985learning}, {top-$k$}~\citep{fan2018hierarchical}, {top-$p$ (nucleus)}~\citep{holtzman2019curious}, {$\eta$}~\citep{hewitt2022truncation}, and {min-$p$ sampling}~\citep{nguyen2024turning}. They generally apply heuristics to control diversity and risk. Specifically, min-$p$ sampling ~\citep{nguyen2024turning} stands out for its dynamic truncation of low-probability tokens using a threshold tied to the probability of the top token. Although this method performs well at high temperatures ($T$), its exclusive reliance on the maximum probability token to estimate confidence disregards the potential distribution of the probability mass over the remaining vocabulary. As a result, min-$p$ remains vulnerable to \emph{over-truncation} in sparse (low-entropy) distributions and \emph{under-truncation} in dense (high-entropy) distributions.

The above limitation motivates the need for a more methodical \emph{confidence-aware} sampling framework that accounts for the overall shape of the distribution, rather than only its peak.
In addition, the proliferation of heuristic methods highlights a deeper issue, namely \textit{the lack of a theory-based foundation to analyze the interplay between creativity and coherence in autoregressive generation}. 

\textbf{Our Contributions.}  
Towards effective incorporation of the confidence of the model, in this work, we present \textbf{top-H} decoding. In particular, top-H maintains the creativity and coherence balance guided by bounded entropy in text generation. 
Unlike most earlier approaches that rely on a fixed threshold, top-H dynamically selects a subset of tokens such that the resulting truncated distribution over the selected subset has an upper-bounded uncertainty while maintaining minimal divergence from the original distribution predicted by the model. 
 
To formally ground top-H, we first introduce a constrained optimization problem that characterizes the trade-off between creativity and coherence in language generation, namely,  \textbf{\textit{entropy-constrained minimum divergence}} (ECMD). We show that this minimization is equivalent to an \textbf{\textit{entropy-constrained mass maximization}} (ECMM) problem. We then prove that ECMM is NP-hard. Thus, in top-H, we offer a greedy solution that is both efficient and practically effective in approximating the solution of the ECMM while bounding the entropy of the selected distribution. During autoregressive generation, as the token distribution evolves at each step, top-H adjusts its entropy threshold based on the entropy of the token distribution, thereby \textbf{\textit{dynamically}} adapting to the model's varying confidence over time.


We validate the effectiveness of top-H through extensive experiments in a diverse set of tasks, including creative writing (Alpaca-Eval \citep{alpaca_eval} and MT-Bench \citep{zheng2023judging}), reasoning (GSM8k \citep{cobbe2021training} and GPQA \citep{rein2024gpqa}), and human-aligned evaluations using LLM as a judge framework. 
Specifically, top-H consistently outperforms existing sampling methods in accuracy while maintaining a robust balance between expressiveness and fluency. For example, compared to min-$p$, top-H demonstrates an accuracy improvement of up to $\mathbf{25.63}\%$.

\section{Related Work}

\subsection{Stochastic Sampling Strategies for Autoregressive Models}
Temperature scaling \citep{ackley1985learning} multiplies the logits by a scalar, encouraging the exploration of less likely tokens. However, it can get too indiscriminate at high $T$s, generating incoherent or contradictory texts.
Top-$k$ \citep{fan2018hierarchical} includes only the $k$ highest probability tokens. Although simple, this \textit{hard cutoff} is insensitive to context, sometimes excluding large swaths of moderately plausible tokens.
Top-$p$ (nucleus) sampling \citep{holtzman2019curious} chooses the smallest subset of tokens whose cumulative probability exceeds $p$. This alleviates some of the rigidity of top-$k$. Unfortunately, at high $T$, the distribution can be so flat that the top-$p$ may inadvertently include very low-probability tokens, harming coherence. This incoherence in top-$p$ sampling is demonstrated in the experimental results Table \ref{thm:equiv}, where the coherence score on text drops significantly at higher $T$. 

Min-$p$ \citep{nguyen2024turning} sampling dynamically scales a base probability score threshold $p_{\mathrm{base}}$ by the probability of the top-1 token. This effectively restricts the sample space more aggressively when the model is confident. Min-$p$ has been shown to outperform top-$p$ in tasks requiring both diversity and correctness at higher temperatures. However, \emph{its reliance on only the highest-probability token} can overlook broader features of the distribution. Two different probability mass functions might share a top-1 token probability; however, they differ widely in their overall confidence.

\subsection{Entropy-Based Sampling Strategies for Autoregressive Models}
Several methods attempt to exploit \emph{entropy} or related uncertainty measures when sampling.
$\eta$-sampling \citep{hewitt2022truncation} dynamically adjusts the sampling threshold based on the entropy of the distribution of the next token. However, this method often requires carefully tuned hyperparameters and can introduce significant runtime overhead at higher $T$s.
Mirostat \citep{basu2020mirostat} aims to maintain a target perplexity (related to entropy) via feedback control. Although it can yield steady perplexities, it adds complexity to parameter tuning and integration into generation pipelines.

Despite their \textit{entropy-aware} intentions, these approaches do not strictly limit the randomness of the sampling distribution; instead, they often aim to achieve a perplexity target or modify the sampling heuristics in real-time. As a result, controlling the maximum allowed randomness in the final distribution, thus ensuring both coherence and flexibility, can be challenging.


\section{Motivational Case Study}
\begin{wrapfigure}{r}{0.65\textwidth}
\vspace{-6mm}
  \begin{center}
    \includegraphics[width=0.62\textwidth]{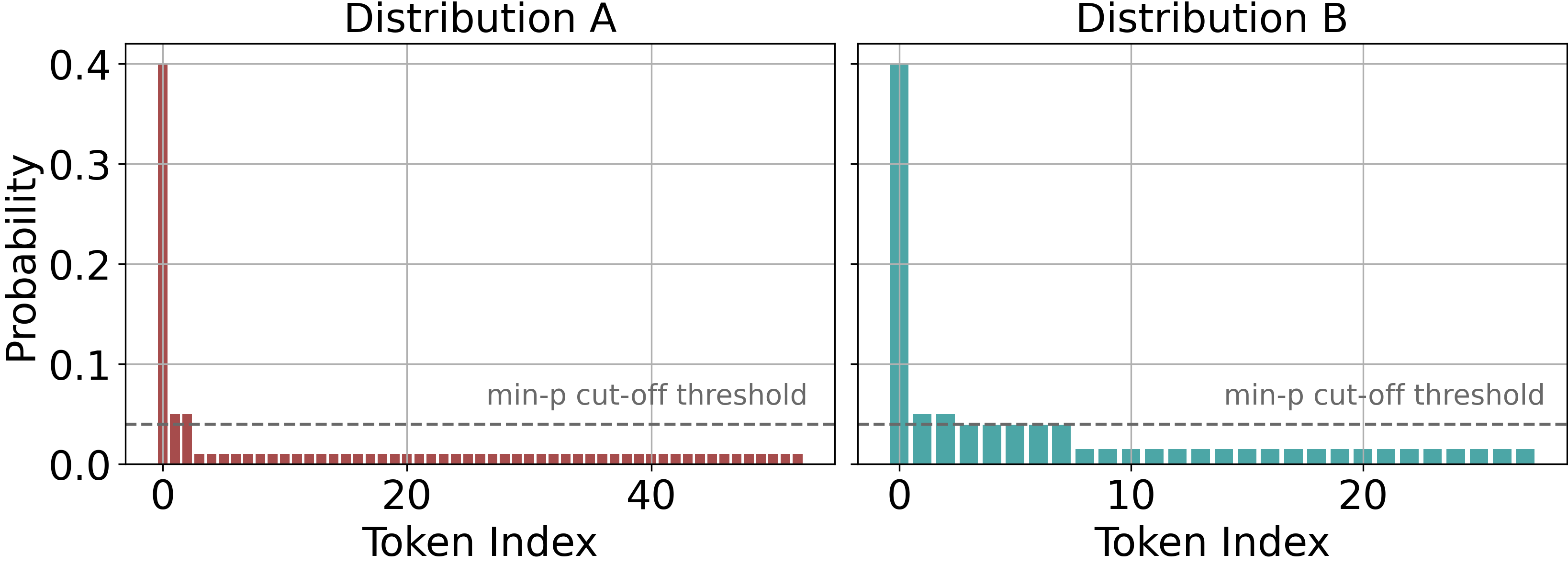}
  \end{center}
  \vspace{-4mm}
  \caption{Probability distribution of two different types with associated min-$p$ threshold.}
  \vspace{-7mm}
  \label{fig:minp}
\end{wrapfigure}
This section presents a key motivation to develop a new sampling method, despite the widespread use of nucleus and min-$p$ sampling within the community. Specifically, we try to pose the following question.

\textit{Why do we need a more \textbf{distribution-aware} sampling technique if min-$p$ already considers the model's confidence?}

Min-$p$ employs a dynamic truncation threshold by modulating the maximum probability of the next token probability distribution with a base factor. Although this approach accounts for the confidence of the model to some extent, it is insufficient to select an optimal sampling pool.

Consider one scenario where min-$p$ may yield low efficacy as illustrated in Fig.~\ref{fig:minp}. Distributions A and B represent two token probability distributions over the vocabulary, where tokens are sorted by their probability values, and tokens not shown have a probability of zero. Since both distributions have the same maximum probability, min-$p$ applies a similar cut-off threshold. However, the two distributions are distinct in terms of confidence. Distribution A exhibits greater randomness, as it contains numerous low-probability tokens, while distribution B includes some high-probability tokens discarded due to min-$p$'s truncation threshold. This example demonstrates that the min-$p$ approach does not accurately capture the underlying distributional characteristics. Consequently, we are motivated to adopt a sampling method that considers the overall shape of the probability distribution rather than solely relying on a maximum probability threshold. In Appendix~\ref{append:toph-trunc}, we demonstrate how our proposed sampling strategy, top-H, addresses this issue using the exact same example.

\section{Theoretical Foundation for Entropy-Based Truncation Sampling}
\vspace{-1em}
This section establishes the theoretical foundations of top-H sampling. Given a language model \(\mathcal{M}\) and a preceding context window \(x_{1:t-1}\), the probability distribution over the vocabulary \(\mathcal{V}\) for the next token \(x_t\) can be written as,
\begin{equation}
    p(x_t) = \mathcal{M}(x_{1:t-1}).
\end{equation}
\vspace{-0.3mm}
Our objective is to determine a subset \(S \subseteq \mathcal{V}\) from which the next token will be sampled, ensuring that the resulting probability distribution over the subset S, denoted \(q(x_t) : S \to [0, 1]\) with \(\sum_{x_t \in S} q(x_t) = 1\), satisfies the following desired characteristics:

\begin{enumerate}
    \item \textbf{Minimum divergence from the original probability mass function:}  
    The subset \(S\) should be constructed such that the distribution over the tokens in the subset, $q(x_t)$, has minimal divergence from the original distribution \(p(x_t)\), thereby “maximally matches” \(p(x_t)\).

    \item \textbf{Reduced randomness for enhanced coherence}: The probability mass function \(q(x_t)\) should exhibit lower randomness compared to \(p(x_t)\) in the sense that \(H(q) \leq H(p)\) where \(H(.)\) denotes the Shannon entropy, effectively upper-bounding uncertainty.
    
\end{enumerate}

These criteria form the basis of top-H, which seeks to construct \(S\) and calculate \(q\) so that \(q\) maximally matches \(p\) while exhibiting lower randomness compared to it\footnote{From now on, we will use $p$ and $q$ to refer the distributions of the next token defined over the original set (\(\mathcal{V}\)) and the selected subset ($S$) of tokens.}.  To regulate diversity in a controllable manner, we introduce a parametric randomness bound, parameterized by $\alpha$ (see Eq. \ref{eq:core-constraint}). We formalize this objective as a minimization of the Jensen–Shannon divergence (JSD) between $p$ and $q$ under the parametric entropy constraint. Formally, we intend to solve the following. 

\begin{equation}
\label{eq:core-constraint}
\min_{S}\, \mathrm{JSD}(q \,\|\, p) 
\quad \text{subject to}\quad
H(q) \;\leq\; \alpha \,H(p),
\end{equation}

\noindent
where $\alpha \in (0,1)$ is a tunable hyperparameter. We refer to this problem as \textbf{\textit{entropy-constrained minimum divergence}} (ECMD).
By upper-bounding $H(q)$ in proportion to $H(p)$, ECMD encourages the sampling of \emph{more} tokens in the case of higher uncertainty (higher $H(p)$) and the \emph{less} token in the case of lower uncertainty (lower $H(p)$). This approach preserves coherent tokens in contexts where the model "knows" likely the next token, yet encourages exploration when multiple candidates plausibly fit the context, precisely where creativity is more beneficial. Therefore, with an appropriate choice of \(\alpha\), solving the ECMD problem can \textit{ideally balance creativity and coherence} in autoregressive text generation. In the rest of this section, we prove the following statements. I) Minimizing JSD under an entropy bound is equivalent to maximizing the sum of probabilities of the tokens in $S$ (subject to $H(q) \leq \alpha H(p)$). II) The ECMD problem is, in general, NP-hard.


\subsection{Formulation of the JSD Minimization Problem}
\label{sec:formulation}
We first start by defining the values of each element in the probability distribution of $p$ and $q$, respectively. 
Assuming  \(v_i\) denotes the $i^{th}$ token in the dictionary $\mathcal{V}$, the conditional probability $p_i$ of selecting $v_i$ as the $t^{th}$ generated token given $x_{1:t-1}$ is,
\[
p_i = \texttt{Prob}(x_t=v_i|x_{1:t-1}) \;\;\; \mathrm{for} \;\;\;i = 1,2.\dots ,n
\]

\noindent
where \(n=|\mathcal{V}|\), $|.|$ identifies the cardinality of a set. Similarly, the conditional probability $q_i$ of selecting $v_i$ as the $t^{th}$ generated token given $x_{1:t-1}$ is
\begin{equation}
q_i =
\begin{cases} 
\frac{p_i}{\Gamma_S} \;\;\;\;\;\;\; &  v_i \in S\ \\[10pt]
0 & \mathrm{otherwise.}
\end{cases}
, \;\; \mathrm{where} \;\;\; \Gamma_S = \sum_{i} p_i \mathds{1}_{\{v_i \in S\}}
\label{total1}
\end{equation}

Having defined the distributions, the {Jensen-Shannon divergence} between \(p\) and \(q\) is calculated as 
\begin{equation}
    \mathrm{JSD}(p||q) = \frac{1}{2} D_{KL}(p||M) + \frac{1}{2} D_{KL}(q||M), \;\; \mathrm{where} \;\; M = \frac{1}{2} (p + q)
    \label{jsd}
\end{equation}

Next, without loss of generality, we use the properties of JSD and re-formulate the ECMD problem as a maximization problem of the probability mass function for ease of analysis.

\subsection{Equivalence to Entropy‑Constrained Mass Maximization}
The ECMD problem in Equation~\ref{eq:core-constraint} is challenging to analyze directly due to the complexity associated with the expansion of JSD. We thus reformulate the problem using the \(\Gamma_S\) metric to facilitate analysis and interpretation. The following theorem formalizes the necessary condition for achieving the optimal solution to the original optimization problem in terms of \(\Gamma_S\).

\label{sec:equivalence}

\vspace{3mm}
\begin{theorem}
\label{thm:equiv_jsd}
\textit{
The Jensen-Shannon divergence between the distributions \(p\) and \(q\) is only dependent on the \(\Gamma_S\) and can be minimized by maximizing  \(\Gamma_S\).}
\begin{proof}
    Refer to Appendix~\ref{proof_jsd} for the proof. 
\end{proof}
\end{theorem}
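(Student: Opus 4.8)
The plan is to expand $\mathrm{JSD}(p\|q)$ explicitly using the definition in Eq.~\eqref{jsd} and the closed form of $q_i$ in Eq.~\eqref{total1}, and show that every surviving term collapses into a function of $\Gamma_S$ alone. First I would split the vocabulary into $S$ (where $q_i = p_i/\Gamma_S$) and $\mathcal{V}\setminus S$ (where $q_i = 0$), and write $M_i = \tfrac12(p_i + q_i)$ accordingly: for $i \in S$, $M_i = \tfrac{p_i}{2}(1 + 1/\Gamma_S)$; for $i \notin S$, $M_i = \tfrac{p_i}{2}$. Substituting into $D_{KL}(p\|M) = \sum_i p_i \log(p_i/M_i)$ and $D_{KL}(q\|M) = \sum_{i\in S} q_i \log(q_i/M_i)$, the $p_i/M_i$ and $q_i/M_i$ ratios lose all dependence on the individual $p_i$ values — the ratio is $2/(1+1/\Gamma_S)$ on $S$ and $2$ off $S$ for the first term, and $2/(1+\Gamma_S)$ on $S$ for the second term. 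Pulling the constants out of the sums and using $\sum_{i\in S} p_i = \Gamma_S$, $\sum_{i\notin S} p_i = 1-\Gamma_S$, and $\sum_{i\in S} q_i = 1$, the whole expression reduces to an explicit elementary function $g(\Gamma_S)$, something like $g(\Gamma) = -\log\!\bigl(\tfrac{1+\Gamma}{2}\bigr) + (1-\Gamma)\log 2 - \tfrac12\log\Gamma$ up to algebra.

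Next I would establish monotonicity: differentiate $g$ with respect to $\Gamma \in (0,1]$ and show $g'(\Gamma) < 0$ throughout, so that $\mathrm{JSD}(p\|q)$ is strictly decreasing in $\Gamma_S$. This immediately gives that minimizing JSD is the same as maximizing $\Gamma_S$, which is the claim. A sanity check at $\Gamma_S = 1$ (i.e.\ $S = \mathcal{V}$, so $q = p$) should give $g(1) = 0$, matching $\mathrm{JSD}(p\|p) = 0$; and $g(\Gamma) \to \infty$ as $\Gamma \to 0^+$, matching the intuition that a vanishing retained mass is maximally divergent. These checks also guard against sign or factor-of-two slips in the expansion.

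The main obstacle I anticipate is purely bookkeeping: keeping the base-of-logarithm conventions consistent and making sure the cross terms between the $S$ and $\mathcal{V}\setminus S$ parts of $D_{KL}(p\|M)$ are combined correctly, since $q$ is supported only on $S$ while $p$ is supported on all of $\mathcal{V}$. A secondary subtlety is handling $\Gamma_S$ values where some $p_i = 0$ or $q_i = 0$, which I would dispatch with the standard convention $0\log 0 = 0$ so that zero-probability tokens contribute nothing and the reduction to $g(\Gamma_S)$ still goes through. Once the closed form $g(\Gamma_S)$ is in hand, the monotonicity argument is a one-line derivative computation, so the real work — and the only place an error is likely to creep in — is the algebraic collapse of the double sum into a single-variable function.
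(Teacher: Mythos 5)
Your plan is essentially the paper's own proof: split the sums over $S$ and its complement, observe that the ratios $p_i/M_i$ and $q_i/M_i$ depend only on $\Gamma_S$, collapse the two KL terms using $\sum_{i\in S}p_i=\Gamma_S$, $\sum_{i\notin S}p_i=1-\Gamma_S$, $\sum_{i\in S}q_i=1$, and then show the resulting single-variable function is strictly decreasing by a one-line derivative, which is exactly what the appendix does (arriving at $\mathrm{JSD}=\log 2+\tfrac12\bigl(\Gamma_S\log\Gamma_S-(1+\Gamma_S)\log(1+\Gamma_S)\bigr)$ with derivative $\tfrac12\bigl(\log\Gamma_S-\log(1+\Gamma_S)\bigr)<0$). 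One small correction to your sanity checks: your guessed $g$ and the claim that $g(\Gamma_S)\to\infty$ as $\Gamma_S\to 0^+$ are off, since JSD is bounded above by $\log 2$ and the correct closed form tends to $\log 2$ in that limit; this does not affect the structure or validity of your argument.
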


As a result, ECMD can be rewritten as the following,
\begin{equation}
\max_{S} \ \Gamma_S \quad s.t. \quad H(q) \leq \alpha H(p) \rightarrow \max_{S} \; \sum_{i} \; p_i \, \mathds{1}_{\{v_i \in S\}} \quad s.t. \quad H(q) \leq \alpha H(p)
\label{eq:maxS}
\end{equation}
\noindent
We name the above formulation as \textit{\textbf{entropy‑constrained mass maximization}} (ECMM).
This reformulated version of the problem is easier to reason about. Next, we prove that given \(0 <\alpha <1 \), the problem remains NP-hard. Finally, we propose a greedy approach as a solution to this.  Unless otherwise specified, we empirically set \(\alpha = 0.4\) and use it throughout our analysis.

\subsection{NP-Hardness Proof of the ECMM Problem}
\label{sec:nphard}

\begin{theorem}
\label{thm:np-hardness}
\textit{
The entropy-constrained mass maximization problem is NP-hard.}

\begin{proof}
    
In Appendix \ref{proof_np_hardness}, we present a detailed polynomial-time reduction from the well-known cardinality-constrained subset-sum (CCSS) problem \citep{garey1979computers}. As CCSS is a popular NP-complete problem, our formulation establishes the NP-hardness of ECMM. 
\end{proof}

\end{theorem}

\section{Top-H Decoding Method}
Having established the NP-hardness of the \textsc{ECMM} problem, we recognize that it cannot be solved efficiently in the general cases. Thus, to produce a practical, efficient, and yet competitive solution, we now present a greedy approximation algorithm, namely \textbf{top-H}. {Top-H} incrementally maximizes the objective of Eq. \ref{eq:maxS}, while adhering to the imposed entropy constraint.


\begin{algorithm}[H]
\caption{Top-H: proposed greedy token selection algorithm}
\label{alg:entropy-selection}
\begin{algorithmic}[1]
\Require Probability mass function \(p = (p_1, p_2, \ldots, p_n)\), entropy threshold coefficient \(\alpha \in (0,1)\)
\Ensure Selected token set \(S\)
\State Sort tokens in descending order of probability: \(p_1 \geq p_2 \geq \ldots \geq p_n\)
\State Initialize \(S \gets \emptyset\), \(H(q) \gets 0\)
\For{each token \(i\) in sorted order}
    \State Add token \(i\) to \(S\)
    \State Compute updated distribution \(q\) over \(S\)
    \State Compute entropy \(H(q)\)
    \If{\(H(q) > \alpha \cdot H(p)\)}
        \State Remove token \(i\) from \(S\)
        \State \textbf{break}
    \EndIf
\EndFor
\State \Return \(S\)
\end{algorithmic}
\end{algorithm}

Algorithm~\ref{alg:entropy-selection} outlines the token selection strategy of top-H. The objective is to maximize the probability mass of the tokens \(\sum_{i} p_i \, \mathds{1}_{\{v_i \in S\}}\), where the tokens \(v_i\) are selected into the sampling set \(S\). To achieve this, the algorithm begins by sorting all candidate tokens in the \textit{ descending} order of their probabilities. It then iteratively adds tokens to the sampling set in this order. After each addition, a distribution \(q\) is constructed over the selected tokens, and its entropy is calculated. Top-H continues this process until the entropy of \(q\) reaches the dynamic\footnote{At each step of auto-regressive token generation, the model produces a new probability distribution \(p\), causing the entropy threshold \(\alpha H(p)\) to vary dynamically across generation steps.
} threshold \(\alpha \cdot H(p)\), ensuring that the selected subset respects the global entropy constraint. 

Unlike prior truncation-based sampling methods, top-H explicitly controls the randomness of the distribution it samples from, \(H(q)\), by adapting it to the entropy of the original next token probability distribution \(H(p)\). As a result, the allowed randomness dynamically adjusts throughout the steps of autoregressive generation as \(p\) evolves. In Section \ref{empi:gap}, we provide empirical evidence on the competitiveness of the top-H's greedy approach in solving the ECMM. 

We now present a theorem that guarantees the termination of the algorithm, with an \textit{early} convergence governed by the entropy scaling coefficient \(\alpha\).

\paragraph{Termination Guarantee.}
Entropy is a \textit{non-linear} and \textit{non-monotonic} function. Thus, the entropy of the distribution \(q\) over a set \(S\) is not predictable as tokens are added. Specifically, adding a token to \(S\) can increase or decrease the entropy, depending on the underlying probabilities. However, under a greedy selection strategy, it can be shown that each additional token strictly increases the entropy of \(q\). Consequently, the entropy constraint is not a vacuous bound, and the growth of \(S\) is inherently bounded; the set cannot expand indefinitely without eventually violating the entropy constraint. This intuition is formalized in the following theorem.
\vspace{3mm}
\begin{theorem}
\label{lem:termination}
\textit{
Consider a greedy algorithm that selects tokens in descending order of their probabilities. Let \(q\) be the probability mass function over the selected tokens. Then, the entropy of \(q\) increases strictly at each selection step and is maximized only when all tokens are selected. Therefore, if the entropy threshold coefficient \(\alpha\) is chosen such that \(0 < \alpha < 1\), the algorithm is guaranteed to terminate before all tokens are selected.
}

\begin{proof}
Refer to Appendix~\ref{proof_early} for the proof.
\end{proof}

\end{theorem}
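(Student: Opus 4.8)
The plan is to parametrize the greedy process by the number $k$ of tokens selected so far, letting $q^{(k)}$ denote the renormalized distribution on the top-$k$ tokens $\{v_1,\dots,v_k\}$ (with the probabilities sorted as $p_1\ge p_2\ge\dots\ge p_n$) and $\Gamma_k=\sum_{i\le k}p_i$. The core object is the \emph{grouping (chain-rule) identity} for Shannon entropy, applied to the partition $\{v_1,\dots,v_k\}\cup\{v_{k+1}\}$ of the support of $q^{(k+1)}$: the conditional distribution of $q^{(k+1)}$ given the event $\{v_1,\dots,v_k\}$ is exactly $q^{(k)}$, and the singleton block contributes zero entropy, so with $x_k=p_{k+1}/\Gamma_{k+1}\in(0,1)$ and $H_2(x)=-x\log x-(1-x)\log(1-x)$ the binary entropy,
\[
H(q^{(k+1)}) \;=\; H_2(x_k)\;+\;(1-x_k)\,H(q^{(k)}).
\]
Thus the strict-increase claim reduces to verifying the single inequality $H_2(x_k) > x_k\,H(q^{(k)})$ whenever $p_{k+1}>0$.

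To prove that inequality I would combine two elementary estimates. First, because tokens are added in descending probability order, $p_{k+1}\le p_i$ for all $i\le k$, hence $p_{k+1}\le\Gamma_k/k$ and therefore $x_k=p_{k+1}/(\Gamma_k+p_{k+1})\le 1/(k+1)$, giving $\log(1/x_k)\ge\log(k+1)$. Second, $q^{(k)}$ lives on $k$ atoms, so $H(q^{(k)})\le\log k<\log(k+1)\le\log(1/x_k)$; together with the trivial bound $H_2(x)\ge x\log(1/x)$ (the dropped term $-(1-x)\log(1-x)$ being nonnegative) this yields $H_2(x_k)\ge x_k\log(1/x_k)>x_k\,H(q^{(k)})$ as soon as $x_k>0$. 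Substituting back into the grouping identity gives $H(q^{(k+1)})>H(q^{(k)})$. A one-line base case ($H(q^{(1)})=0$ and $H(q^{(2)})=H_2(x_1)>0$) then makes $k\mapsto H(q^{(k)})$ strictly increasing, so it attains its maximum only at $k=n$, where $q^{(n)}=p$ and $H(q^{(n)})=H(p)$.

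Termination is then immediate: for $0<\alpha<1$ and any non-degenerate $p$ (i.e. $H(p)>0$) we have $\alpha H(p)<H(p)=H(q^{(n)})$. Since $H(q^{(k)})$ increases strictly to $H(p)$, the stopping test $H(q^{(k)})>\alpha H(p)$ must trigger at some step $k^\ast\le n$; in particular it certainly triggers when the $n$-th token is tentatively added, because then $H(q^{(n)})=H(p)>\alpha H(p)$. Hence the algorithm removes that token and breaks with $|S|\le n-1<n$. The degenerate case $H(p)=0$ (a point mass) is disposed of by restricting to the support, which then contains a single token.

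The step I expect to be the main obstacle is the strict-increase inequality $H_2(x_k)>x_k H(q^{(k)})$: a newly appended atom can, in principle, siphon away up to $\log k$ of entropy from the $k$ existing atoms, so the argument must exploit the descending-order assumption in an essential way to keep $x_k$ small. Indeed the monotonicity genuinely fails without it — appending a large-probability token last can sharply \emph{decrease} $H(q)$ after renormalization — so $p_{k+1}=\min_{i\le k+1}p_i$ must be used somewhere, and the natural place it enters is exactly the bound $x_k\le 1/(k+1)$. Beyond that, only routine care with edge cases (zero-probability tokens, the $k=1$ base case, and $H(p)=0$) is needed once the grouping identity and the $x_k$ bound are in hand.
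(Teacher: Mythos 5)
Your proposal is correct and follows essentially the same route as the paper: your grouping identity $H(q^{(k+1)})=H_2(x_k)+(1-x_k)H(q^{(k)})$ is algebraically the same recursion the paper derives by expanding $H(q^{j})=\log\Gamma_j-\tfrac{1}{\Gamma_j}\sum_{i\le j}p_i\log p_i$, and your key bounds ($H(q^{(k)})\le\log k$ together with $k\,p_{k+1}\le\Gamma_k$ from the descending order, giving $x_k\le 1/(k+1)$) are exactly the paper's inequality $H(q^{j-1})\le\log(j-1)\le\log(\Gamma_{j-1}/p_j)$. The only differences are cosmetic: your packaging via the chain rule and the bound $H_2(x)\ge x\log(1/x)$ yields strict positivity directly, while the paper's algebra additionally produces the quantitative lower bound $\Delta H\ge\log(1+p_j/\Gamma_{j-1})$; both arguments share the same implicit caveat about zero-probability tokens.
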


The termination guarantee uses the monotonic growth of entropy under the greedy selection procedure. Each token added to the set contributes positively to the entropy, regardless of its probability, thus ensuring that the entropy $H(q)$ approaches the threshold \(\alpha\, H(p)\). The algorithm stops adding tokens to the set $S$ at the moment when any further addition of tokens would violate the constraint. 
This ensures that the \textsc{ECMM} objective avoids the trivial solution of selecting all tokens while still satisfying the entropy constraint.

\section{Experiments}
\label{sec:experiments}
\vspace{-0.8em}
\subsection{Experimental Setup}
\label{exp:setup}
\paragraph{Models, sampling methods, and datasets.}
We evaluate {top-H} on three recent instruction-tuned language models, namely, LLaMA3.1–8B–Instruct~\citep{grattafiori2024llama}, Qwen2.5–3B~\citep{yang2024qwen2}, and Phi-3-Mini-4K-Instruct~\citep{abdin2024phi}. As baselines, we compare with several widely used truncation-based sampling methods, namely, top-$k$, top-$p$ (nucleus sampling), min-$p$, and $\eta$-sampling. Our evaluations span multiple benchmarks designed to test creative generation, reasoning ability, and evaluative judgment. Specifically, we also used the Alpaca-Eval dataset~\citep{alpaca_eval}, GSM8K~\citep{cobbe2021training}, GPQA~\citep{rein2024gpqa}, MT-Bench~\citep{zheng2023judging}, and an LLM-as-a-judge evaluation setting.

\textbf{Experimental settings.} For decoding hyperparameters, we follow the configuration in~\citep{nguyen2024turning}, using \texttt{min\_p} = 0.1, \texttt{top\_p} = 0.9, and \texttt{$\eta$} = 0.0002 for min-$p$, top-$p$, and $\eta$-sampling methods, respectively. We choose the best result out of the \texttt{k} = 10, 20, and 50 for top-$k$ method. Regarding the evaluation, we use the \texttt{lm-eval-harness} framework~\citep{lm-eval-harness} and report exact match accuracy with the flexible extract filter on the GPQA and GSM8K datasets, length-controlled win rate on Alpaca-Eval, and judge scores (on a scale from 1 to 10) on MT-Bench. For Alpaca-Eval and MT-Bench, we used GPT-4o \citep{openai2024gpt4o} as the judge LLM. All experiments were conducted on a single NVIDIA A6000 GPU, and algorithms were implemented using PyTorch version \texttt{2.5.1+cu124} and the Hugging Face Transformers library version \texttt{4.50.1}.

\subsubsection{Performance on Creative Writing: Alpaca-Eval and MT-Bench}

Fig. ~\ref{fig:alpaca_mtb} presents compelling evidence for the superiority of top-H sampling compared to alternative SoTA approaches. In Fig. ~\ref{fig:alpaca_mtb}(a-c) (Alpaca-Eval), top-H shows remarkable improvements over the state-of-the-art min-$p$ method, and also conventional sampling methods. For example, for LLaMA3.1-8B across different  $T$, top-H demonstrates an win-rate ($\%$) improvement of up to $\mathbf{17.11}\%$ compared to SoTA min-$p$ sampling. 
A critical finding from Fig. ~\ref{fig:alpaca_mtb} is the resilience of top-H to temperature scaling. \textbf{While traditional sampling methods exhibit severe performance degradation at higher $T$, top-H preserves much of its effectiveness}. For instance, for LLaMA3.1-8B-Instruct in Fig. ~\ref{fig:alpaca_mtb}(a), top-$p$ sampling shows a catastrophic $34.06\%$ decline in win rate from $T$=1 to $T$=2. In contrast, top-H experiences only a $3.78\%$ reduction over the same temperature range. This robustness is particularly significant given that higher $T$ settings are essential for generating diverse, creative texts. The MT-Bench results (Fig. ~\ref{fig:alpaca_mtb}(d-f)) further validate the capability of top-H. For example, for LLaMA3.1-8B, similar to that on Alpaca-Eval, the advantage becomes more pronounced at higher $T$, with top-H achieving a higher score value of up to $\mathbf{3.78}$.


\begin{figure}[!t]
    \centering
    \includegraphics[width=0.86\linewidth]{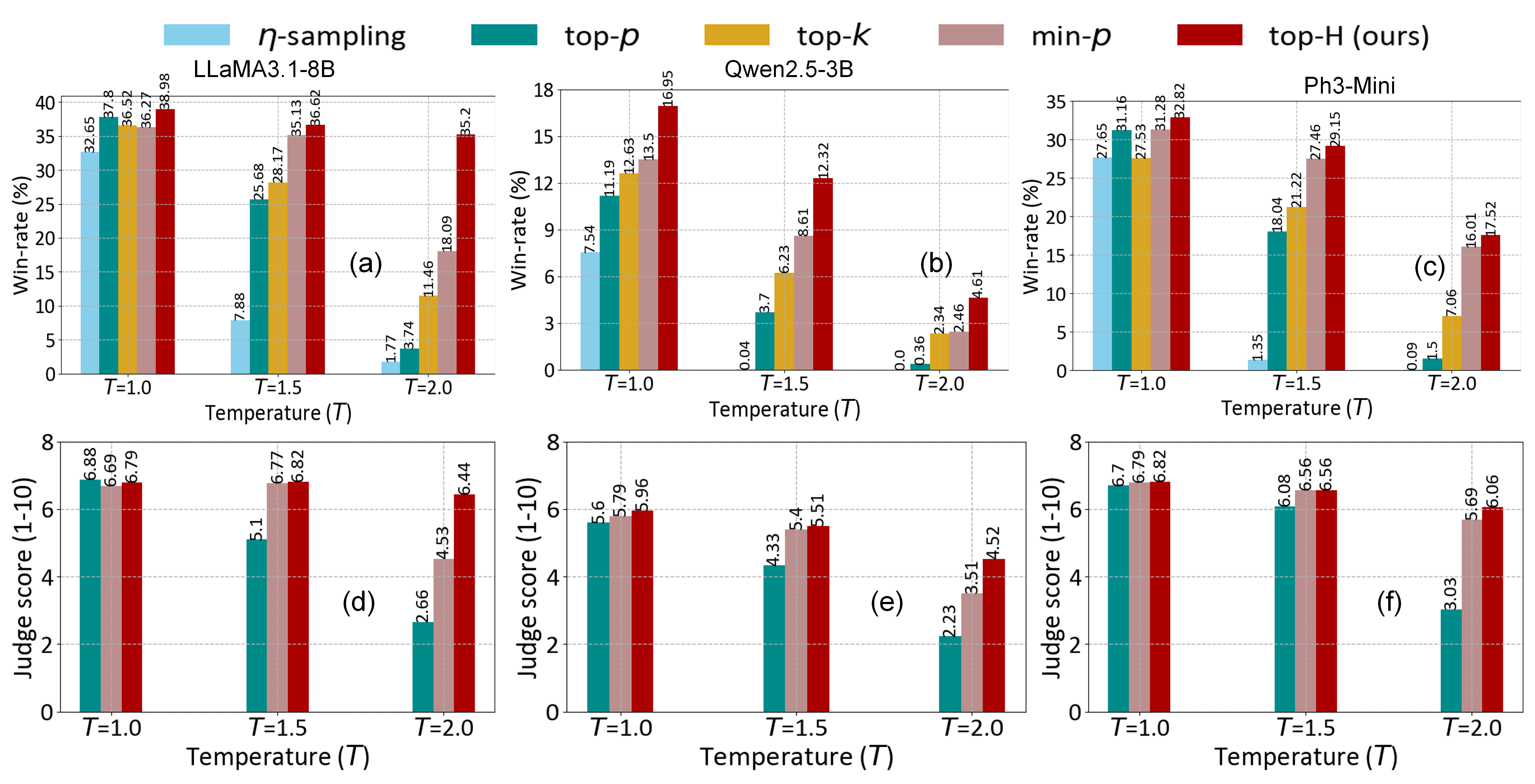}
    \vspace{-4mm}
    \caption{(a)-(c): Length-controlled win rates (\%) comparison of different SoTA sampling with top-H on Alpaca-Eval benchmark. (d)-(f): Judge scores (on a scale of 1 to 10) on MT-Bench.}
    \vspace{-5mm}
    \label{fig:alpaca_mtb}
\end{figure}

\subsubsection{Performance on Reasoning and CoT Tasks}
Following the setup in \citep{nguyen2024turning} we use the \texttt{gsm\_cot} (8-shot) and \texttt{gpqa\_main\_generative\_n\_shot} (8-shot) tasks for GSM8k and GPQA, respectively.




The experimental results in Tables~\ref{tab:gsm} and~\ref{tab:GPQA} demonstrate the effectiveness of top-H compared to min-$p$ and top-$p$ sampling across language models and temperature settings. Temperature is a key hyperparameter in generation, striking a balance between creativity and factual accuracy.

On the GSM8k benchmark (Table~\ref{tab:gsm}), top-H consistently outperforms the alternatives. At $T$=1, it leads for all the models, outperforming both min-$p$ and top-$p$ by a significant margin. Top-H maintains strong accuracy as temperature increases, while the baselines degrade significantly. At $T$=2, the contrast becomes even more pronounced, with top-$p$ showing near-total collapse (declining by up to 73.62\% (Phi-3-Mini) in accuracy), while top-H experiences a very modest degradation, showing an accuracy improvement of up to $\mathbf{25.63}\%$ compared to min-$p$ (on LLaMA3.1-8B).

A similar trend is observed in GPQA benchmark (Table~\ref{tab:GPQA}). At $T$=1, top-H remains competitive, outperforming both top-$p$ and min-$p$ on Qwen2.5 and Phi-3-Mini. At $T$=2, it exhibits notable robustness, maintaining performance levels substantially higher than those of top-$p$, which experiences significant deterioration. Compared to min-$p$, top-H an accuracy improvement of up to 3.12\%, 2.67\%, and 7.36\% on Qwen2.5, LLaMA3.1, and Phi-3-Mini, respectively. In summary, \textbf{top-H demonstrates competitive performance even at low temperatures and significantly superior performance at higher temperatures, marking it as a reliable sampling strategy for diverse generation needs}.

\begin{table}[!htbp]
\centering
\scriptsize
\renewcommand{\arraystretch}{1.4}
\begin{tabular}{c|ccc|ccc|ccc}
\hline
\multirow{2}{*}{\textbf{Temperature}} 
& \multicolumn{3}{c|}{\textbf{Qwen2.5 3B}} 
& \multicolumn{3}{c|}{\textbf{LLaMA3.1-8B-Instruct}} 
& \multicolumn{3}{c}{\textbf{Phi-3-Mini}} \\
\cline{2-10}
 & \textbf{Min-$p$} & \textbf{Top-$p$} & \textbf{Top-H}
 & \textbf{Min-$p$} & \textbf{Top-$p$} & \textbf{Top-H}
 & \textbf{Min-$p$} & \textbf{Top-$p$} & \textbf{Top-H} \\
\hline
1.0 
& 72.40 & 71.27 & \textbf{75.97}   
& 48.90 & 67.93 & \textbf{76.35} 
& 81.96 & 81.35 & \textbf{83.24} \\ 

1.5 
& 66.79 & 55.57 & \textbf{72.55}   
& 58.00 & 23.81 & \textbf{70.51} 
& 77.10 & 67.25 & \textbf{77.86} \\ 

2.0 
& 49.43 & 9.10 & \textbf{55.57}   
& 13.72 & 2.65 & \textbf{39.35} 
& \textbf{60.88} & 7.73 & 60.20 \\ 
\hline
\end{tabular}
\vspace{4pt}
\caption{Accuracy (\%) for top-H, min-$p$, and top-$p$ on GSM8K.}
\label{tab:gsm}
\vspace{-5mm}
\end{table} 
\begin{table}[!htbp]
\centering
\scriptsize
\renewcommand{\arraystretch}{1.4}
\begin{tabular}{c|ccc|ccc|ccc}
\hline
\multirow{2}{*}{\textbf{Temperature}} 
& \multicolumn{3}{c|}{\textbf{Qwen2.5 3B}} 
& \multicolumn{3}{c|}{\textbf{LLaMA3.1-8B-Instruct}} 
& \multicolumn{3}{c}{\textbf{Phi-3-Mini}} \\
\cline{2-10}
 & \textbf{Min-$p$} & \textbf{Top-$p$} & \textbf{Top-H}
 & \textbf{Min-$p$} & \textbf{Top-$p$} & \textbf{Top-H}
 & \textbf{Min-$p$} & \textbf{Top-$p$} & \textbf{Top-H} \\
\hline
1.0 
& 28.35 & 27.68 & \textbf{28.79} 
& 26.34 & \textbf{32.81} & 29.24 
& 31.92 & 30.58 & \textbf{32.37} \\ 

1.5 
& \textbf{30.13} & 27.23 & 27.90 
& 28.35 & 28.57 & \textbf{30.58} 
& 29.91 & 28.57 & \textbf{30.80} \\ 

2.0 
& 25.00 & 22.32 & \textbf{28.12}   
& 26.12 & 23.88 & \textbf{28.79} 
& 23.44 & 18.53 & \textbf{30.80} \\ 
\hline
\end{tabular}
\vspace{4pt}
\caption{Accuracy (\%) for top-H, min-$p$, and top-$p$ on GPQA.}
\label{tab:GPQA}
\vspace{-5mm}
\end{table}

\subsubsection{Performance Analysis with LLM-as-a-Judge}
\label{exp:judge}
In this section, we employ the LLM-as-a-Judge framework to directly evaluate the creativity and coherence of texts generated using min-$p$, top-$p$, and top-H sampling strategies. Following the evaluation setup proposed in~\citep{nguyen2024turning}, we use three open-ended prompts designed to elicit creative storytelling on diverse topics. We generate responses using three different models: {LLaMA3.1-8B-Instruct}, {Qwen2.5-3B}, and {Phi3-Mini-4k-Instruct}, each evaluated across three different temperature settings. The top-$p$ and min-$p$ sampling methods serve as baselines for comparison. The prompts used are closely aligned with those in~\citep{nguyen2024turning} and are listed in Appendix~\ref{prompts-judge}. We use GPT-4o \citep{openai2024gpt4o} as the judge model to assess the outputs, which scores the responses based on creativity and coherence using the evaluation prompt detailed in Appendix \ref{prompts-judge}.

For each evaluation, the outputs of the three sampling strategies are randomly shuffled to mitigate positional bias. The scores are then extracted from the GPT-4o evaluation responses. To reduce the impact of randomness and noise, each experimental configuration, defined by model, temperature, prompt, and sampling strategy, is \textbf{repeated five times}, and the average score is reported. The results for {LLaMA3.1-8B-Instruct} are presented in Table~\ref{tab:llm_judge}. The results in Table~\ref{tab:llm_judge} reveal a consistent trend: At lower temperatures, top-H produces outputs with significantly higher creativity, originality, and coherence compared to min-$p$ and top-$p$ sampling methods in all three prompts. As \(\textit{T}\) increases, the top-$p$ sampling suffers a marked decline in coherence, often generating fragmented and incoherent text. This degradation stems from top-$p$'s lack of awareness of the model's confidence, as it truncates the distribution based purely on cumulative probability without accounting for distributional entropy.

In contrast, min-$p$ and top-H maintain stronger coherence at higher temperatures by adaptively limiting their sampling pools based on model confidence. Among the two, top-H consistently outperforms min-$p$ in both creativity and coherence. This is attributed to top-H's direct control over the entropy of the selected token set, allowing it to modulate randomness in alignment with the model's uncertainty. Additional LLM-as-a-Judge results supporting these conclusions are provided in Table~\ref{tab:llm_judge_append} in the Appendix, which covers the evaluations on the {Qwen2.5} and {Phi-3-Mini} models.

\begin{table}[htbp]
\centering

\tiny
\begin{tabular}{cccccccccc}
\toprule
\textbf{Temperature} & \textbf{Prompt} & \textbf{Sampling} &  & \textbf{M1} & \textbf{M2} & \textbf{M3} & \textbf{M4} & \textbf{M5} & \textbf{Average} \\
\midrule
\multirow{10}{*}{1.0}
  & \multirow{3}{*}{Prompt 1}
    & Top-$p$ & & 7.45 \textcolor{black}{\(\pm 0.20\)} & 6.35 \textcolor{black}{\(\pm 0.22\)} & \textbf{8.75} \textcolor{black}{\(\pm 0.26\)} & 6.60 \textcolor{black}{\(\pm 0.30\)} & 7.55 \textcolor{black}{\(\pm 0.15\)}   & 7.45 \textcolor{black}{\(\pm 0.30\)}  \\
  &     & Min-$p$ & & 8.25  \textcolor{black}{\(\pm 0.22\)}  & 7.60 \textcolor{black}{\(\pm 0.26\)} & 8.25 \textcolor{black}{\(\pm 0.15\)} & 7.65 \textcolor{black}{\(\pm 0.20\)} & 7.60 \textcolor{black}{\(\pm 0.15\)}  & 7.85 \textcolor{black}{\(\pm 0.26\)}  \\
  &     & Top-H & & \textbf{8.80} \textcolor{black}{\(\pm 0.15\)} & \textbf{8.65} \textcolor{black}{\(\pm 0.20\)} & 8.40 \textcolor{black}{\(\pm 0.22\)} & \textbf{8.05} \textcolor{black}{\(\pm 0.15\)} & \textbf{8.55} \textcolor{black}{\(\pm 0.22\)}  & \textbf{8.45} \textcolor{black}{\(\pm 0.26\)}  \\
\cmidrule{2-10}
  & \multirow{3}{*}{Prompt 2}
    & Top-$p$ & & 7.85 \textcolor{black}{\(\pm 0.15\)}  & 7.20 \textcolor{black}{\(\pm 0.15\)}  & \textbf{8.35} \textcolor{black}{\(\pm 0.22\)} & 7.05 \textcolor{black}{\(\pm 0.35\)} & 7.50 \textcolor{black}{\(\pm 0.15\)} & 7.60 \textcolor{black}{\(\pm 0.30\)} \\
  &     & Min-$p$ & & 7.25 \textcolor{black}{\(\pm 0.24\)}  & 7.20 \textcolor{black}{\(\pm 0.15\)} & 8.05 \textcolor{black}{\(\pm 0.26\)} & 7.55 \textcolor{black}{\(\pm 0.20\)} & 6.75 \textcolor{black}{\(\pm 0.35\)} & 7.40 \textcolor{black}{\(\pm 0.22\)} \\
  &     & Top-H & & \textbf{8.10} \textcolor{black}{\(\pm 0.2\)}  & \textbf{8.10} \textcolor{black}{\(\pm 0.3\)} & 8.25 \textcolor{black}{\(\pm 0.15\)} & \textbf{7.90} \textcolor{black}{\(\pm 0.15\)} & \textbf{8.35} \textcolor{black}{\(\pm 0.26\)} & \textbf{8.25} \textcolor{black}{\(\pm 0.30\)} \\
\cmidrule{2-10}
  & \multirow{3}{*}{Prompt 3}
    & Top-$p$ & & 6.80 \textcolor{black}{\(\pm 0.26\)} & 6.10 \textcolor{black}{\(\pm 0.22\)} & \textbf{8.90} \textcolor{black}{\(\pm 0.22\)} & 7.05 \textcolor{black}{\(\pm 0.20\)} & 7.65 \textcolor{black}{\(\pm 0.35\)} & 7.20 \textcolor{black}{\(\pm 0.15\)} \\
  &     & Min-$p$ & & 6.7 \textcolor{black}{\(\pm 0.26\)} & 6.65 \textcolor{black}{\(\pm 0.25\)} & 7.65 \textcolor{black}{\(\pm 0.26\)} & 6.77 \textcolor{black}{\(\pm 0.25\)} & 7.00 \textcolor{black}{\(\pm 0.20\)} & 6.90 \textcolor{black}{\(\pm 0.30\)} \\
  &     & Top-H & & \textbf{8.15} \textcolor{black}{\(\pm 0.26\)} & \textbf{8.05} \textcolor{black}{\(\pm 0.26\)} & 8.00 \textcolor{black}{\(\pm 0.15\)} & \textbf{8.30} \textcolor{black}{\(\pm 0.31\)} & \textbf{8.25} \textcolor{black}{\(\pm 0.20\)} & \textbf{8.05} \textcolor{black}{\(\pm 0.30\)} \\
\cmidrule{1-10}
\multirow{10}{*}{1.5}
  & \multirow{3}{*}{Prompt 1}
    & Top-$p$ & & 7.45 \textcolor{black}{\(\pm 0.15\)} & 7.10 \textcolor{black}{\(\pm 0.26\)} & 8.20 \textcolor{black}{\(\pm 0.22\)} & 7.55 \textcolor{black}{\(\pm 0.30\)} & 7.35 \textcolor{black}{\(\pm 0.22\)} & 7.55 \textcolor{black}{\(\pm 0.31\)} \\
  &     & Min-$p$ & & 7.95 \textcolor{black}{\(\pm 0.22\)} & 7.55 \textcolor{black}{\(\pm 0.35\)} & 8.25 \textcolor{black}{\(\pm 0.20\)} & 7.55 \textcolor{black}{\(\pm 0.26\)} & 7.60 \textcolor{black}{\(\pm 0.22\)} & 7.80 \textcolor{black}{\(\pm 0.26\)} \\
  &     & Top-H & & \textbf{8.75} \textcolor{black}{\(\pm 0.22\)} & \textbf{9.05} \textcolor{black}{\(\pm 0.26\)} & \textbf{8.50} \textcolor{black}{\(\pm 0.15\)} & \textbf{8.40} \textcolor{black}{\(\pm 0.15\)} & \textbf{8.80} \textcolor{black}{\(\pm 0.20\)} & \textbf{8.80} \textcolor{black}{\(\pm 0.22\)} \\
\cmidrule{2-10}
  & \multirow{3}{*}{Prompt 2}
    & Top-$p$ & & 7.80 \textcolor{black}{\(\pm 0.30\)}  & 7.75 \textcolor{black}{\(\pm 0.22\)} & \textbf{8.65} \textcolor{black}{\(\pm 0.35\)} & 7.00 \textcolor{black}{\(\pm 0.22\)} & 7.65 \textcolor{black}{\(\pm 0.26\)} & 7.75 \textcolor{black}{\(\pm 0.22\)} \\
  &     & Min-$p$ & & 7.30 \textcolor{black}{\(\pm 0.20\)} & 7.10 \textcolor{black}{\(\pm 0.31\)} & 7.87 \textcolor{black}{\(\pm 0.30\)} & 6.80 \textcolor{black}{\(\pm 0.26\)} & 6.75 \textcolor{black}{\(\pm 0.26\)} & 7.10 \textcolor{black}{\(\pm 0.15\)} \\
  &     & Top-H & & \textbf{8.10} \textcolor{black}{\(\pm 0.20\)} & \textbf{8.10} \textcolor{black}{\(\pm 0.26\)} &  8.05 \textcolor{black}{\(\pm 0.15\)} & \textbf{7.70} \textcolor{black}{\(\pm 0.20\)} & \textbf{8.05} \textcolor{black}{\(\pm 0.22\)} & \textbf{8.10} \textcolor{black}{\(\pm 0.22\)} \\
\cmidrule{2-10}
  & \multirow{3}{*}{Prompt 3}
    & Top-$p$ & & 7.40 \textcolor{black}{\(\pm 0.20\)} & 6.85 \textcolor{black}{\(\pm 0.26\)} & 7.70 \textcolor{black}{\(\pm 0.15\)} & 7.20 \textcolor{black}{\(\pm 0.22\)} & 8.35 \textcolor{black}{\(\pm 0.31\)} & 7.45 \textcolor{black}{\(\pm 0.30\)} \\
  &     & Min-$p$ & & 6.35 \textcolor{black}{\(\pm 0.20\)} & 6.05 \textcolor{black}{\(\pm 0.22\)} & \textbf{7.85} \textcolor{black}{\(\pm 0.22\)} & 6.55 \textcolor{black}{\(\pm 0.15\)} & 7.20 \textcolor{black}{\(\pm 0.26\)} & 6.80 \textcolor{black}{\(\pm 0.26\)} \\
  &     & Top-H & & \textbf{8.35} \textcolor{black}{\(\pm 0.30\)} & \textbf{7.80} \textcolor{black}{\(\pm 0.31\)} & 7.80 \textcolor{black}{\(\pm 0.20\)} & \textbf{8.05} \textcolor{black}{\(\pm 0.22\)} & \textbf{8.10} \textcolor{black}{\(\pm 0.30\)} & \textbf{8.05} \textcolor{black}{\(\pm 0.26\)} \\
\cmidrule{1-10}
\multirow{10}{*}{2.0}
  & \multirow{3}{*}{Prompt 1}
    & Top-$p$ & & 7.00 \textcolor{black}{\(\pm 0.26\)} & 6.45 \textcolor{black}{\(\pm 0.30\)} & 5.35 \textcolor{black}{\(\pm 0.26\)} & 5.40 \textcolor{black}{\(\pm 0.26\)} & 5.60 \textcolor{black}{\(\pm 0.24\)} & 5.95 \textcolor{black}{\(\pm 0.22\)} \\
  &     & Min-$p$ & & 8.05 \textcolor{black}{\(\pm 0.31\)} & 8.35 \textcolor{black}{\(\pm 0.31\)} & 7.65 \textcolor{black}{\(\pm 0.24\)} & 7.15 \textcolor{black}{\(\pm 0.22\)} & 7.65 \textcolor{black}{\(\pm 0.31\)} & 7.70 \textcolor{black}{\(\pm 0.20\)} \\
  &     & Top-H & & \textbf{8.80} \textcolor{black}{\(\pm 0.22\)} & \textbf{9.05} \textcolor{black}{\(\pm 0.24\)} & \textbf{8.75} \textcolor{black}{\(\pm 0.26\)} & \textbf{8.70} \textcolor{black}{\(\pm 0.15\)} & \textbf{8.80} \textcolor{black}{\(\pm 0.22\)} & \textbf{8.85} \textcolor{black}{\(\pm 0.20\)} \\
\cmidrule{2-10}
  & \multirow{3}{*}{Prompt 2}
    & Top-$p$ & & 8.25 \textcolor{black}{\(\pm 0.20\)}  & 7.65 \textcolor{black}{\(\pm 0.30\)} & 3.85 \textcolor{black}{\(\pm 0.20\)} & 5.20 \textcolor{black}{\(\pm 0.24\)} & 6.30 \textcolor{black}{\(\pm 0.22\)} & 6.25 \textcolor{black}{\(\pm 0.15\)} \\
  &     & Min-$p$ & & 7.60 \textcolor{black}{\(\pm 0.15\)} & 7.45 \textcolor{black}{\(\pm 0.30\)} & 7.15 \textcolor{black}{\(\pm 0.31\)} & 7.55 \textcolor{black}{\(\pm 0.20\)} & 7.40 \textcolor{black}{\(\pm 0.35\)} & 7.40 \textcolor{black}{\(\pm 0.26\)} \\
  &     & Top-H & & \textbf{8.85} \textcolor{black}{\(\pm 0.20\)} & \textbf{8.35} \textcolor{black}{\(\pm 0.31\)} & \textbf{8.60} \textcolor{black}{\(\pm 0.26\)} & \textbf{8.60} \textcolor{black}{\(\pm 0.30\)} & \textbf{8.70} \textcolor{black}{\(\pm 0.22\)} & \textbf{8.60} \textcolor{black}{\(\pm 0.22\)} \\
\cmidrule{2-10}
  & \multirow{3}{*}{Prompt 3}
    & Top-$p$ & & 7.15 \textcolor{black}{\(\pm 0.25\)} & \textbf{8.05} \textcolor{black}{\(\pm 0.24\)} & 4.20 \textcolor{black}{\(\pm 0.24\)} & 5.65 \textcolor{black}{\(\pm 0.20\)} & \textbf{7.80} \textcolor{black}{\(\pm 0.31\)} & 6.55 \textcolor{black}{\(\pm 0.30\)} \\
  &     & Min-$p$ & & 6.80 \textcolor{black}{\(\pm 0.31\)} & 6.75 \textcolor{black}{\(\pm 0.31\)} & 7.20 \textcolor{black}{\(\pm 0.30\)} & 6.30 \textcolor{black}{\(\pm 0.15\)} & 6.35 \textcolor{black}{\(\pm 0.20\)} & 6.65 \textcolor{black}{\(\pm 0.20\)} \\
  &     & Top-H & & \textbf{8.0} \textcolor{black}{\(\pm 0.31\)} & 7.1 \textcolor{black}{\(\pm 0.22\)} & \textbf{9.0} \textcolor{black}{\(\pm 0.15\)} & \textbf{8.05} \textcolor{black}{\(\pm 0.20\)} & 7.05 \textcolor{black}{\(\pm 0.20\)} & \textbf{7.65} \textcolor{black}{\(\pm 0.24\)} \\
\midrule[\heavyrulewidth]
\end{tabular}
\vspace{4pt}
\caption{Evaluation metrics and the judge scores (on a scale of 1.0 to 10.0) for different temperatures, prompts, and sampling methods on \textbf{LLaMA3.1-8B-Instruct}. M1-M5 denote creativity, originality, narrative flow, imagery, and vitality, respectively.}
\label{tab:llm_judge}
\vspace{-5mm}
\end{table}

In Appendix \ref{more_results}, we present additional results and discussions on top-H, including \textbf{evaluations with a larger 70B model}, \textbf{human evaluation} of creativity and coherence across different sampling techniques, and \textbf{comparison of top-H to Mirostat method}.

\subsubsection{Computational Overhead and Timing Comparisons}


We compare per-token decode latency (ms/token) of top-H against top-$p$ and min-$p$ on three models: \textbf{LLaMA3.1-8B-Instruct}, \textbf{Phi-3-Mini-3.8B}, and \textbf{LLaMA3.3-70B-Instruct} in the Table \ref{R1T1}. For each configuration, we evaluate on 100 prompts from AlpacaEval, generating 128 tokens per prompt, and report the mean ms/token over prompts. Specifically, we observe a negligible overhead of \textbf{as low as 0.8\%} compared to min-$p$ and top-$p$.

\paragraph{Computational complexity.}
Let $n$ denote the vocabulary size, and let $p_1 \ge p_2 \ge \cdots \ge p_n$ be the sorted probabilities.  
Sorting the logits dominates the computational cost for all cumulative decoding methods, requiring $O(n \log n)$ time. Subsequent operations such as partial selection or cumulative thresholding in top-$p$ and min-$p$ decoding only involve a single linear pass, adding $O(n)$ additional work but not changing the overall asymptotic complexity.

\noindent
For top-H decoding, define the \emph{partial entropy}
$
h_j = \sum_{i=1}^{j} p_i \log p_i .
$
According to the proof of Theorem~\ref{proof_early}, the entropy of the distribution $q^j$ is given by
\[
H(q^j) \;=\; \log \Gamma_j \;-\; \frac{h_j}{\Gamma_j},
\]

where the cumulative mass satisfies $\Gamma_j = \Gamma_{j-1} + p_j$,  
and the partial entropy follows $h_j = h_{j-1} + p_j \log p_j$.  
These recurrences enable \emph{incremental entropy accumulation} (Alg.~\ref{alg:cumulative-entropy}),  
which updates $H(q^j)$ in $O(1)$ time per step, or $O(n)$ in total given sorted inputs. Therefore, the overall complexity of top-H decoding is also bounded by the sorting step, i.e., $O(n \log n)$. In practice, $\log p_j$ values are directly available from the model’s log-probabilities.


\begin{algorithm}[H]
\caption{Incremental entropy accumulation}
\label{alg:cumulative-entropy}
\begin{algorithmic}[1]
\State Initialize $\Gamma \gets 0$, $h \gets 0$, $H \gets 0$
\For{each step $j$}
    \State $\Gamma \gets \Gamma + p_j$
    \State $h \gets h + p_j \log p_j$
    \State $H \gets \log(\Gamma) - \dfrac{h}{\Gamma}$
\EndFor
\end{algorithmic}
\end{algorithm}




\begin{table}[!htbp]
\centering
\scriptsize
\renewcommand{\arraystretch}{1.4}
\begin{tabular}{c|ccc|ccc|ccc}
\hline
\multirow{2}{*}{\textbf{Temperature}}
& \multicolumn{3}{c|}{\textbf{LLaMA3.1-8B-Instruct}}
& \multicolumn{3}{c|}{\textbf{Phi-3-Mini}}
& \multicolumn{3}{c}{\textbf{LLaMA3.3-70B-Instruct}} \\
\cline{2-10}
 & \textbf{Top-H} & \textbf{Min-$p$} & \textbf{Top-$p$}
 & \textbf{Top-H} & \textbf{Min-$p$} & \textbf{Top-$p$}
 & \textbf{Top-H} & \textbf{Min-$p$} & \textbf{Top-$p$} \\
\hline
1.0 & 28.3951 & 27.3396 & 27.4275 & 24.3847 & 23.6499 & 23.7809 & 219.3837 & 219.1391 & 218.4900 \\
2.0 & 28.4671 & 27.3840 & 27.4389 & 24.5929 & 23.9397 & 23.5844 & 219.3428 & 218.3609 & 217.7083 \\
\hline
\end{tabular}
\vspace{4pt}
\caption{Average runtime per token (ms/token) across sampling strategies and models.}
\label{R1T1}
\vspace{-16pt}
\end{table}

\begin{wrapfigure}{r}{0.40\textwidth}
\vspace{-2mm}
  \begin{center}
    \includegraphics[width=0.40\textwidth]{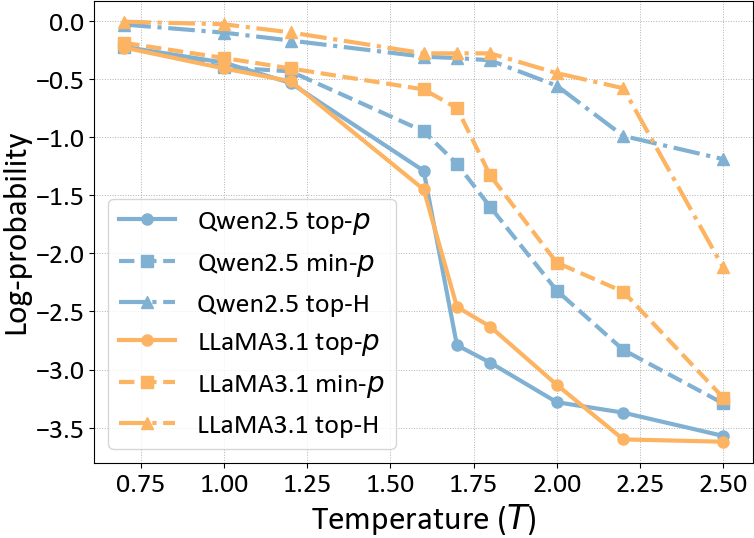}
  \end{center}
  \vspace{-4mm}
  \caption{Effect of $T$ scaling on generation coherence in min-$p$, top-$p$ vs top-H.}
  \vspace{-4mm}
  \label{fig:temp_coherence}
\end{wrapfigure}

\subsection{Discussions and Ablations}
\textbf{Sensitivity of the text to the temperature scaling.} In this section, we present a quantitative analysis of how the coherence of generated text varies with changes in sampling temperature. We conducted experiments using the Qwen2.5-3B and LLaMA3.1-8B-Instruct models on prompts from the Alpaca-Eval dataset. To operationalize coherence, we use the total log-probability (log-likelihood) of the generated sequence as a proxy: higher total log-probability suggests that the model is more confident in the output, which we interpret as a signal of greater coherence.


Specifically, we compute the log-likelihood of each generated token during autoregressive generation, average these values across the entire sequence. This process is repeated in multiple temperature settings ---\(0.7\), \(1.2\), \(1.6\), \(2.0\), and \(2.5\)---and for three different sampling strategies: top-$p$, min-$p$, and top-H. The result is portrayed in Fig. \ref{fig:temp_coherence}. As the temperature increases, the log-likelihood of the text generated under min-$p$ and top-$p$ sampling declines sharply. This suggests that the coherence of these methods is highly sensitive to temperature and that at higher temperatures, where increased creativity is encouraged, the generated text tends to become less coherent. In contrast, top-H adjusts adaptively to the entropy of the distribution of the next token \(\textit{H}(p)\), effectively constraining randomness. As a result, it maintains more consistent and coherent output even in high-temperature settings.

\textbf{Impact of $\alpha$ parameter.} The only hyperparameter in top-H sampling is $\alpha$, which directly controls the maximum allowable entropy for the distribution \(q\). As such, a careful tuning of $\alpha$ is essential. To determine an appropriate value, we randomly select 50 development samples from the Alpaca-Eval dataset and use LLaMA3.1–8B–Instruct to generate responses. We explore values of $\alpha$ in the range [0.1, 0.9], with increments of 0.05. For each candidate value, we run the model on the development set and evaluate the outputs using our LLM-as-a-judge prompt (the same as in Section \ref{exp:judge}) to assess both creativity and coherence. The optimal value of $\alpha$ is selected based on its ability to best balance these two objectives. The results of the creativity and coherence evaluation, averaged over 50 development samples, are presented in Figure~\ref{fig:alpha}. As \(\alpha\) increases, the entropy threshold becomes more permissive, allowing greater randomness in token selection. Consequently, creativity tends to increase, while coherence tends to decline. The optimal value of \(\alpha\) is the point at which these two metrics are best balanced. Based on the figure, we observe that \(\alpha = 0.4\) produces the highest average in the creativity and coherence scores, indicating it as the most suitable choice. Additional quantitative results are provided in Appendix~\ref{app:alpha_more_results}.

\begin{wrapfigure}{r}{0.36\textwidth}
\vspace{-6mm}
  \begin{center}
\includegraphics[width=0.34\textwidth]{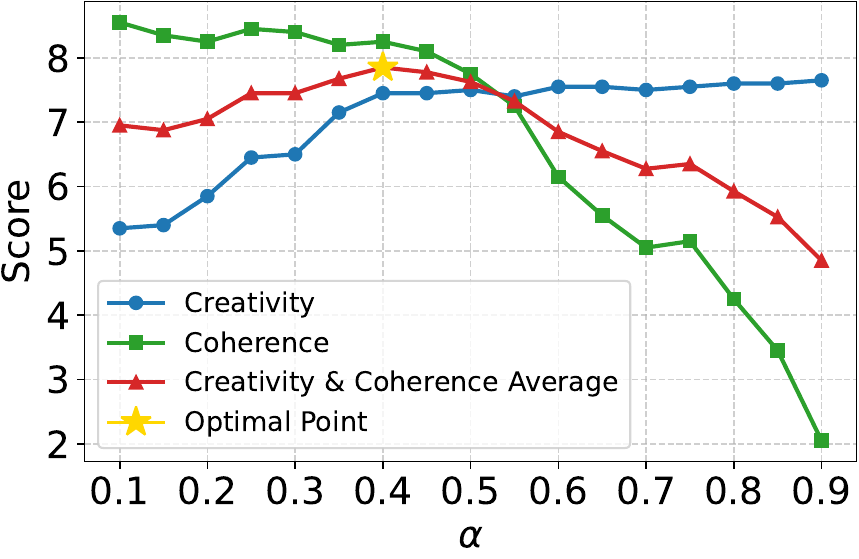}
  \end{center}
  \vspace{-4mm}
  \caption{Effect of the parameter $\alpha$ on creativity and coherence.}
  \vspace{-4mm}
  \label{fig:alpha}
\end{wrapfigure}

\paragraph{Empirical optimality of the top-H decoding strategy.}
\label{empi:gap}
We now empirically evaluate the competitiveness of the greedy algorithm of the top-H relative to the optimal solution of the \textsc{ECMM} problem, found by exhaustive search. We randomly sample 20 prompts from the Alpaca-Eval dataset and generate responses using the top-H method. At each generation step, the candidate set of tokens is restricted to the top-15 tokens of the probability distribution predicted by the model. To obtain the optimal solution, we exhaustively enumerate all possible \(2^{15}\) subsets of the feasible token set and identify the subset \(S^*\) that maximizes the objective \(\Gamma_{S^*}\), subject to the entropy constraint \(H(q) \leq 0.4\,H(p)\), with \(q\) denoting the distribution over selected subset.

\begin{wrapfigure}{r}{0.42\textwidth}
\vspace{-5mm}
  \begin{center}
\includegraphics[width=0.38\textwidth]{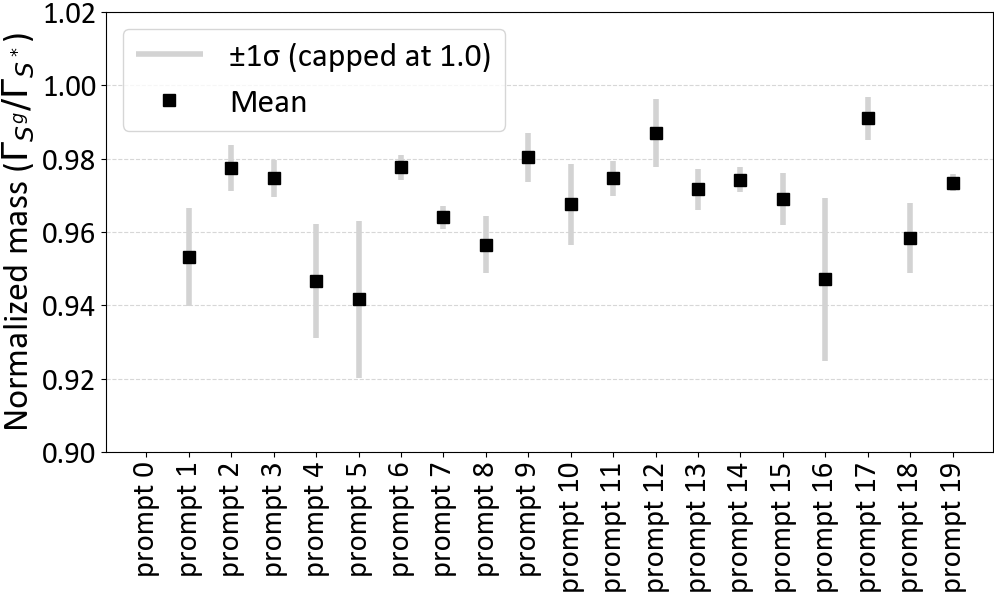}
  \end{center}
  \vspace{-4mm}
  \caption{Empirical evaluation of top-H performance relative to the optimal solution of the ECMM problem.}
  \vspace{-4mm}
  \label{fig:stats}
\end{wrapfigure}

For comparison, we also compute the greedy solution \(S^g\) using Algorithm~\ref{alg:entropy-selection}. At each generation step, we calculate the ratio \(\Gamma_{S^g} / \Gamma_{S^*}\), and report the mean and variance of this ratio (across different generation steps) for 20 different evaluation prompts, as visualized in Figure~\ref{fig:stats}. As shown in the figure, the mean of the ratio remains consistently close to 1.0 across randomly sampled instances from the dataset, with only minor variance. Although deriving a formal approximation guarantee is beyond the scope of this work, our empirical results indicate that the solution obtained by top-H for the ECMM problem closely approximates the optimal solution in practice.

While the primary goal of this work is to empirically demonstrate the efficacy of top-H decoding in addressing the ECMM, we defer a detailed theoretical analysis of the associated error bounds to future research. Nevertheless, in Appendix~\ref{append:formal_approx}, we provide a preliminary worst-case error bound for the greedy top-H solution under a specific assumption about the next-token probability distribution.


\vspace{-3mm}
\section{Conclusions}
\vspace{-2mm}
This paper addresses the challenge of balancing creativity and coherence in LLMs, particularly under high-temperature settings, where coherence often deteriorates. We introduce the entropy-constrained mass maximization (ECMM) problem, which formalizes the objective of balancing creativity and coherence by imposing an entropy constraint on the distribution of tokens in the sampling set. After proving the NP-hardness of ECMM, we propose top-H, a computationally efficient greedy algorithm that effectively approximates the solution of ECMM problem. Extensive empirical evaluation across various tasks demonstrates that top-H consistently outperforms established sampling strategies such as top-$p$ and min-$p$, achieving up to \textbf{25.6}\% higher accuracy. These results establish top-H as a new state-of-the-art method for creative writing in LLMs.

\section*{Acknowledgments}
This work was partially supported by a grant from the Directorate for Computer and Information Science and Engineering (CISE) of the National Science Foundation.

\bibliography{neurips_2025}
\bibliographystyle{neurips_2025}


\newpage
\appendix
\section{Proofs of Theorems} \label{appendix-proofs}

Unless otherwise specified, all logarithms are natural log.
\subsection{Proof of Theorem~\ref{thm:equiv_jsd}}
\label{proof_jsd}
\begin{proof} [Proof of Theorem~\ref{thm:equiv_jsd}]

The probability of the selected tokens needs to be divided by their sum, to make sure that the sum of the distribution q is 1. Given:

\begin{equation}    
\sum_{i} p_i = 1 \;\; \Rightarrow \;\; \sum_{i} p_i \mathds{1}_{\{v_i \notin S\}} = 1 - \Gamma_S  \label{total2}
\end{equation} 
According to \ref{jsd}:

\[
M =
\begin{cases} 
\frac{p_i + \frac{p_i}{\Gamma_S}}{2} \;\;\;\;\;\;\; & i \leq m \\[10pt]
\frac{p_i}{2} & i > m
\end{cases}
\]

\[
D_{KL}(p||M) = \sum_{i=1}^{m} p_i \log \left( \frac{p_i}{\frac{1}{2} (p_i + \frac{p_i}{\Gamma_S})} \right) + \sum_{i=m+1}^{n} p_i \log \left( \frac{p_i}{\frac{1}{2} p_i} \right)
 = \sum_{i=1}^{m} p_i \log \left( \frac{2 \Gamma_S}{1 + \Gamma_S} \right) + \sum_{i=m+1}^{n}p_i \log (2)
\]


\[
=  \log \left( \frac{2 \Gamma_S}{1 +\Gamma_S} \right) \sum_{i=1}^{m} p_i+\log (2)\sum_{i=m+1}^{n} p_i
\]
Using (\ref{total1}) and (\ref{total2}):
\[
= \Gamma_S \log \left( \frac{2\Gamma_S}{1+\Gamma_S} \right) +\log (2)(1 - \Gamma_S) 
=  \Gamma_S \left[ \log \left( \frac{\Gamma_S}{1+\Gamma_S} \right) + \log (2) \right] + \log (2)(1 - \Gamma_S)
\]


\begin{equation}
 =\Gamma_S \log \left( \frac{\Gamma_S}{1+\Gamma_S} \right) + \log (2)\Gamma_S + \log (2) - \log (2)\Gamma_S 
    =  \Gamma_S \log \left( \frac{\Gamma_S}{1+\Gamma_S} \right) + \log (2) \label{kl1}
    \end{equation}

\[
D_{KL}(q||M) = \sum_{i=1}^{m} \frac{p_i}{\Gamma_S} \log \left( \frac{\frac{p_i}{\Gamma_S}}{\frac{1}{2} (p_i + \frac{p_i}{\Gamma_S})} \right)
= \frac{1}{\Gamma_S} \sum_{i=1}^{m} p_i \log \left( \frac{2}{1+\Gamma_S} \right) = \frac{1}{\Gamma_S} \log \left( \frac{2}{1+\Gamma_S} \right) \sum_{i=1}^{m} p_i
\]


Using (\ref{total1}):
\begin{equation}
    = \log \left( \frac{2}{1+\Gamma_S} \right) = \log (2) - \log (1+\Gamma_S) \label{kl2}
\end{equation}

Rewriting Jensen-Shannon Divergence using (\ref{kl1}) and (\ref{kl2}):
\[
\mathrm{JSD}(p || q) = \frac{1}{2} \left( \log (2) + \Gamma_S \log \left( \frac{\Gamma_S}{1+\Gamma_S} \right) \right) + \frac{1}{2} \left( \log (2) - \log (1+\Gamma_S) \right)
\]

\[
= \frac{1}{2} \left( 2\log (2) + \Gamma_S \log (\Gamma_S) - \Gamma_S \log (1+\Gamma_S) - \log (1+\Gamma_S) \right)
=\boxed{ \log (2) + \frac{1}{2} \left( \Gamma_S \log (\Gamma_S) - (1+\Gamma_S) \log (1+\Gamma_S) \right)}
\]


Therefore, JSD is only dependent on $\Gamma_S$.
\\Now we want to show that the distance between the distributions is decreasing with respect to $\Gamma_S$:
\[
\frac{d}{d\Gamma_S} \mathrm{JSD}(p || q) = \frac{1}{2} \left( \log (\Gamma_S) + 1 - \log (1+\Gamma_S) - 1 \right)
\]

\[
= \frac{1}{2} \left( \log (\Gamma_S) - \log (1+\Gamma_S) \right) \quad \rightarrow \quad \text{Always negative.}
\]

Therefore,  \(\mathrm{JSD}(p || q)\) is decreasing with respect to  \(\Gamma_S\) and to minimize \(\mathrm{JSD}(p ||q)\), one needs to maximize \(\Gamma_S\). 
\end{proof}

\subsection{NP-Hardness of Entropy-Constrained Mass Maximization}
\label{proof_np_hardness}

All logarithms are natural ($\ln$). Arithmetic is performed on a
\emph{unit\-cost RAM with binary encodings}; an integer $x \ge 1$ occupies
$\lfloor \log_2 x \rfloor + 1$ bits.

\subsubsection{Problem definition}\label{sec:problem}
For a probability vector $\mathbf p = (p_1, \dots, p_n)$ with $\sum_i p_i = 1$, define
\[
H(\mathbf p) := -\sum_{i=1}^n p_i \ln p_i.
\]

The fixed–budget maximization problem $\mathrm{ECMM}$ is
\begin{equation}
\tag{ECMM}
\label{eq:ecmax}
\max_{S \subseteq [n]} \Gamma_S := \sum_{i \in S} p_i
\quad \text{s.t.}\quad H(S) \le 0.4 \cdot H(\mathbf p),
\end{equation}
where $H(S)$ denotes the entropy of the renormalized vector $(p_i)_{i \in S}$.

The corresponding decision version, \textsc{ECME}, is:
\begin{quote}
\emph{Input:} A probability vector $\mathbf p=(p_1,\dots,p_n)$, a mass target
$\beta = \tfrac23$, and the fixed budget $\alpha = 0.4 \cdot H(\mathbf p)$.\\
\emph{Question:} Does there exist $S\subseteq[n]$ such that
\[
\sum_{i\in S} p_i = \beta \quad\text{and}\quad H(S) \le \alpha ?
\]
\end{quote}
We show that the decision variant is NP–complete; the optimization variant is NP–hard.

\subsubsection{Source problem: Cardinality–Constrained Subset Sum}\label{sec:ccss}
\begin{definition}[\textsc{CCSS}]
Given positive integers $w_1,\dots,w_m$, a target 
$\tau$, and an integer $K$ with
$3 \le K \le m$, decide whether some subset of exactly $K$ weights sums to $\tau$.
\end{definition}

\textsc{CCSS} is NP–complete: reduce from the classic \textsc{Subset–Sum};
see, e.g., Papadimitriou~\cite[Exercise\,8.14]{papadimitriou1994computational}.

Our reduction from \textsc{CCSS} to~\eqref{eq:ecmax} needs the following
\emph{narrow–range} condition.

\begin{assumption}[Narrow range]\label{as:narrow}
\[
\frac{\tau}{K+1} < w_i < \frac{\tau}{K-1}\quad (1\le i\le m).
\]
\end{assumption}

The next lemma shows that we may enforce Assumption~\ref{as:narrow} by a
\emph{polynomial–time padding} step.

\begin{lemma}[Padding to narrow range]\label{lem:padding}
There is a polynomial–time transformation that maps an arbitrary
\textsc{CCSS} instance $(w_1,\dots,w_m;\tau;K)$ to an equivalent instance
$(w'_1,\dots,w'_m;\tau';K)$ that satisfies Assumption~\ref{as:narrow}. The new
weights and target have binary lengths polynomial in the original instance size.
\end{lemma}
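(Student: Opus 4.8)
}
The plan is to realize the padding by a single \emph{affine shift} of all the weights. Concretely, I would choose a large positive integer $C$ (specified below) and set $w'_i := w_i + C$ for each $i$, together with $\tau' := \tau + KC$, leaving $K$ and $m$ unchanged. Equivalence of the two \textsc{CCSS} instances is then immediate and exact: since any feasible solution selects \emph{exactly} $K$ indices, for every $I\subseteq[m]$ with $|I|=K$ we have $\sum_{i\in I} w'_i = \sum_{i\in I} w_i + KC$, so $\sum_{i\in I} w_i = \tau \iff \sum_{i\in I} w'_i = \tau'$. Hence the instances have the same answer and the same witness subsets, and the map is clearly computable in time polynomial in the input size.

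Next I would show that for $C$ large enough Assumption~\ref{as:narrow} holds. Writing $w_{\max} := \max_i w_i$ and $w_{\min} := \min_i w_i$, the right-hand inequality $w'_i < \tau'/(K-1)$ (it suffices to check $i$ attaining $w_{\max}$) rearranges to $(K-1)(w_{\max}+C) < \tau + KC$, i.e. $C > (K-1)w_{\max} - \tau$; symmetrically $w'_i > \tau'/(K+1)$ (check $i$ attaining $w_{\min}$) rearranges to $(K+1)(w_{\min}+C) > \tau + KC$, i.e. $C > \tau - (K+1)w_{\min}$. The hypothesis $K\ge 3$ ensures $K-1\ge 2>0$, so these divisions are harmless, and since $w_{\min} \le w_i \le w_{\max}$ for all $i$ (hence $w'_{\min}\le w'_i\le w'_{\max}$), controlling the two extreme indices controls the whole range.

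Finally I would make $C$ explicit, e.g. $C := (K+1)\,w_{\max} + \tau + 1$, and verify in two one-line inequalities (using $w_i\ge 1$, $\tau\ge 1$) that $C$ strictly exceeds both thresholds $(K-1)w_{\max}-\tau$ and $\tau-(K+1)w_{\min}$; a tighter choice such as $C := \max\{(K-1)w_{\max}-\tau,\ \tau-(K+1)w_{\min},\ 0\}+1$ also works. It then follows that each $w'_i \ge w_i + 1 \ge 2$ is a positive integer, that $K$ is still in the valid range $3\le K\le m$ (nothing about $K$ or $m$ changed), and that $\log_2 C = O(\log K + \log w_{\max} + \log \tau)$, so all new numbers have binary length polynomial in the original instance size, which is the claim.

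There is no deep obstacle here: the entire content is in picking $C$ so that \emph{both} strict inequalities of Assumption~\ref{as:narrow} hold at once while $C$ keeps polynomial bit-length. The one point to handle with care is strictness — the assumption requires strict "$<$", not "$\le$", which forces $C$ to lie strictly above the two thresholds; because all quantities are integers this costs only an additive $+1$ and does not affect the polynomial-size bound.
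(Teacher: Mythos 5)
Your proposal is correct and takes essentially the same route as the paper's proof: an additive shift $w'_i = w_i + C$, $\tau' = \tau + KC$ of polynomial bit-length, with equivalence following from the exact cardinality $K$ and the narrow-range condition checked at the extreme weights. The only difference is the choice of constant—the paper hard-codes $C = (K+1)\tau$ (implicitly using $w_{\max}\le\tau$ in its upper-bound step), whereas you derive the two thresholds explicitly and pick $C$ in terms of $w_{\max}$ and $\tau$, which in fact makes the argument go through without that implicit assumption.
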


\begin{proof}
Set
\[
M := (K+1)\,\tau, \qquad w'_i := w_i + M, \qquad \tau' := \tau + K\,M.
\]
Because the same constant $M$ is added to every weight, a subset of exactly $K$
items sums to $\tau$ iff it sums to $\tau'$:
\[
\sum_{i\in S} w_i = \tau \iff \sum_{i\in S} w'_i = \tau'.
\]

\paragraph{Lower bound.} Each new weight satisfies $w'_i > M$.  Moreover
\[
\frac{\tau'}{K+1}
\;=\;\frac{\tau + K(K+1)\tau}{K+1}
\;=\;KT + \frac{\tau}{K+1}
\;<\;K\tau + \tau = M,
\]
so $w'_i > \tau'/(K+1)$.

\paragraph{Upper bound.}  We have $w'_i \le w_{\max} + M \le \tau + (K+1)\tau = (K+2)\tau$.
On the other hand,
\[
\frac{\tau'}{K-1}
\;=\;\frac{(K(K+1)+1)\tau}{K-1}
\;=\;\frac{K^2+K+1}{K-1}\,\tau.
\]
Since $(K+2)(K-1)=K^2+K-2 < K^2+K+1$ for every $K\ge3$, it follows that
$w'_i < \tau'/(K-1)$.  Therefore Assumption~\ref{as:narrow} holds for the padded
instance.

\paragraph{Encoding size.}  The multiplier $M=(K+1)\tau$ increases the bit–length
of the largest weight by at most $\log_2(K+1)$ bits, and the same holds for $\tau'$.
Hence the transformation is polynomial in the input length.
\end{proof}

Henceforth we assume without loss of generality that every \textsc{CCSS}
instance meets Assumption~\ref{as:narrow}; if it does not, we first apply the
padding from Lemma~\ref{lem:padding}.

We also stipulate $K \ge 20$ (duplicate the instance as below if necessary).


\medskip
\noindent\textbf{Scaling step (making $K\!\ge\!20$ while keeping the narrow range).}\;
If the given instance has $K<20$, put  
\[
  d \;:=\; \bigl\lceil 20/K \bigr\rceil ,
\]
duplicate \emph{every} weight $d$ times and set
\[
  K_1 := dK, 
  \qquad 
  \tau_1 := d\tau .
\]
Then
\[
  \exists\,S\subseteq[m]:|S|=K,\;\sum_{i\in S}w_i=\tau
  \;\Longleftrightarrow\;
  \exists\,S\subseteq[m]:|S|=K_1,\;\sum_{i\in S}w_i=\tau_1 ,
\]
so feasibility is preserved and $K_1\ge20$.

The plain duplication, however, may violate Assumption~\ref{as:narrow}
because the new lower bound $\tau_1/(K_1+1)$ can exceed some of the duplicated
weights.  To restore the assumption we now \emph{re‑apply} the padding of
Lemma~\ref{lem:padding} to the instance
$(w_1,\dots,w_m;\,\tau_1;\,K_1)$.
This yields an equivalent instance
\[
  (w'_1,\dots,w'_m;\,\tau';\,K_1)
\]
that \emph{satisfies} Assumption~\ref{as:narrow} and still has
$K_1\ge20$.  

All numbers grow by at most $\lceil\log_2 d\rceil+O(\log K)$ bits,
so the whole transformation remains polynomial‑time.

Henceforth we may—and do—assume that $K\ge20$ and that the narrow‑range
condition holds.

\subsubsection{Reduction to $\mathrm{ECME}$}\label{sec:reduction}
Let $(w_1, \dots, w_m; \tau; K)$ be a \textsc{CCSS} instance that already satisfies
Assumption~\ref{as:narrow}. Define
\[
\gamma_K := \frac{1}{16K^2}, \quad
\theta_K < \frac{1}{2K^2}, \quad
\delta_K := \frac{5\theta_K}{2\ln K}, \quad
\varepsilon_K := \frac{0.0384 + \gamma_K}{\ln K},
\]
\[
\lambda(K) := \left\lceil \frac{0.7333 - \varepsilon_K + \delta_K}{0.133} \right\rceil,
\quad B := \lceil K^{\lambda(K)} \rceil,
\]
\[
 w_b := \frac{\tau}{2B}, \quad \tau_b := B w_b = \tfrac12 \tau, \quad
 W := \tau + \tau_b = \tfrac32 \tau.
\]

Set
\[
 p_i := \frac{w_i}{W}\quad (1\le i\le m), \quad
 p_b := \frac{w_b}{W} = \frac{1}{3B}, \quad
 \beta := \frac{\tau}{W} = \tfrac23.
\]

The resulting $\mathrm{ECME}$ instance contains $n = m + B$ items. The
numbers above are representable with $O(\log K)$ bits, hence the reduction runs
in polynomial time.

\subsubsection{Entropy budget window}\label{sec:budget}
\begin{lemma}[Budget window]\label{lem:window}
For the constructed instance,
\[
\ln K - \gamma_K < 0.4\,H(\mathbf p) < \ln(K+1).
\]
\end{lemma}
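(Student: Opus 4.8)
\textbf{Plan of proof for Lemma~\ref{lem:window} (Budget window).}
The plan is to estimate $H(\mathbf p)$ directly from the structure of the constructed distribution and then multiply by $0.4$. Recall $\mathbf p$ consists of two blocks: the $m$ ``heavy'' atoms $p_i = w_i/W$ with $W = \tfrac32\tau$, and the $B$ ``light'' atoms each equal to $p_b = 1/(3B)$. I would split the entropy accordingly,
\[
H(\mathbf p) = -\sum_{i=1}^m p_i \ln p_i \;-\; B\,p_b \ln p_b
           = -\sum_{i=1}^m p_i \ln p_i \;+\; \tfrac13\ln(3B),
\]
since the light block contributes total mass $B p_b = \tfrac13$ and each term $-p_b\ln p_b = \tfrac{1}{3B}\ln(3B)$.

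The first step is to bound the heavy-block contribution $-\sum_{i=1}^m p_i\ln p_i$ using Assumption~\ref{as:narrow}. The narrow-range condition $\tfrac{\tau}{K+1} < w_i < \tfrac{\tau}{K-1}$ forces every heavy probability into the window $\tfrac{2}{3(K+1)} < p_i < \tfrac{2}{3(K-1)}$, and it also pins the \emph{count} of heavy atoms in a feasible/relevant configuration: since a valid subset has exactly $K$ of them summing to $\tau$, one gets control on $m$ as well (each $p_i$ is $\Theta(1/K)$ and $\sum p_i = \beta = \tfrac23$, so $m = \Theta(K)$). Plugging the two-sided bound on $p_i$ into $-\ln p_i$ gives $\ln\tfrac{3(K-1)}{2} < -\ln p_i < \ln\tfrac{3(K+1)}{2}$, so the heavy block contributes $\tfrac23\ln\tfrac{3K}{2}$ up to an additive error that is $O(1/K)$ — this is where the slack constants $\gamma_K = 1/(16K^2)$, $\theta_K$, $\delta_K$ are designed to absorb the discrepancy between $\ln(K\pm1)$ and $\ln K$ and the rounding in $B = \lceil K^{\lambda(K)}\rceil$. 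Combining with the light-block term $\tfrac13\ln(3B)$ and using $B = K^{\lambda(K)}(1+o(1))$ yields
\[
H(\mathbf p) = \tfrac23\ln\tfrac{3K}{2} + \tfrac13\ln 3 + \tfrac{\lambda(K)}{3}\ln K + (\text{error}),
\]
and the definition of $\lambda(K)$ via $\lceil (0.7333 - \varepsilon_K + \delta_K)/0.133\rceil$ is precisely the choice that makes $0.4\,H(\mathbf p)$ land strictly between $\ln K - \gamma_K$ and $\ln(K+1)$; I would verify both inequalities by substituting the bounds, using $K \ge 20$ to control the constant terms like $\tfrac23\ln\tfrac32$ and $\tfrac13\ln 3$ against the $\ln K$ scale, and using $\varepsilon_K, \delta_K \to 0$ to keep the window nonempty.

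The main obstacle I anticipate is bookkeeping the error terms tightly enough: every $\ln(K\pm1)$ must be replaced by $\ln K \pm O(1/K)$, the ceiling in $B$ contributes a relative error $O(K^{-\lambda(K)})$ to $\ln B$, and $\theta_K$ only satisfies $\theta_K < 1/(2K^2)$ (not an equality), so I must check that the \emph{inequalities} in the window are strict with room to spare rather than hoping for exact cancellation. The cleanest route is to first prove two clean one-sided estimates, $H(\mathbf p) > \tfrac{5}{2}\ln K - \tfrac{5}{2}\gamma_K/0.4 + \dots$ and $H(\mathbf p) < \tfrac{5}{2}\ln(K+1)$ (so that $0.4 H(\mathbf p)$ inherits the window), and only then expand the constants; the specific numeric constants $0.0384$, $0.7333$, $0.133$ in the statement are exactly the decimal expansions of $0.4 \cdot \tfrac{something}{}$-type quantities arising here, so matching them is a matter of careful substitution rather than ingenuity. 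I would keep $K \ge 20$ explicit throughout, since several of the ``error $\ll$ gap'' comparisons fail for small $K$.
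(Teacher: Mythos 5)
Your overall plan mirrors the paper's proof: the same heavy/booster decomposition of $H(\mathbf p)$ (booster block contributing $\tfrac13\ln(3B)=\tfrac13(\ln 3+\lambda(K)\ln K)$, heavy block $\approx\tfrac23(\ln K+\ln\tfrac32)$ via Assumption~\ref{as:narrow}), followed by substituting the definitions of $\varepsilon_K,\delta_K,\lambda(K)$ to place $0.4\,H(\mathbf p)$ in the window. However, there is a genuine gap in how you control the heavy block. Bracketing each heavy probability by $\tfrac{2}{3(K+1)}<p_i<\tfrac{2}{3(K-1)}$ only pins the heavy-block entropy to within an additive $O(1/K)$, and you explicitly plan to let the ``slack constants'' absorb this. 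That cannot work for the \emph{lower} inequality: the window's lower edge sits only $\gamma_K=\tfrac{1}{16K^2}=\Theta(K^{-2})$ below $\ln K$, so an $O(1/K)$ uncertainty in the heavy block (even after multiplying by $0.2667$) swamps the available margin. The $O(1/K)$ headroom exists only on the upper side, via $\ln(K+1)-\ln K\approx 1/K$.

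The paper closes exactly this hole with a second-order estimate and with instance-dependent cancellation rather than ``room to spare.'' Writing the renormalized heavy distribution as $q_i=\tfrac1K+x_i$ with $|x_i|<\tfrac{1}{K(K-1)}$ (the narrow range), a Taylor/strict-concavity bound gives $H(\mathbf q)=\ln K-\theta_K$ with $0<\theta_K<\tfrac{1}{2K^2}$; crucially, this same $\theta_K$ is wired into $\delta_K=\tfrac{5\theta_K}{2\ln K}$ and hence into $\lambda(K)$, so upon substitution the heavy-block deficit cancels \emph{exactly} and one obtains $0.4\,H(\mathbf p)=\ln K-\gamma_K+0.0666\,\theta_K$. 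The strict lower bound then follows from $\theta_K>0$, and the upper bound from $0.0666\,\theta_K<\ln(1+1/K)$ for $K\ge 20$ — not from generic slack. So to repair your argument you must (i) replace the crude two-sided bracket by the second-order bound on the entropy deficit of the renormalized heavy block, and (ii) use that $\delta_K$ is defined in terms of that very deficit, so the lower inequality is an exact algebraic identity plus $\theta_K>0$ rather than an estimate with $O(1/K)$ play; your stated claim that $\gamma_K,\theta_K,\delta_K$ absorb the $\ln(K\pm1)$-versus-$\ln K$ discrepancy misattributes their role ($\gamma_K$ is far too small for that). Your looser treatment of the heavy count ($m=\Theta(K)$ rather than the normalization $\sum_i w_i=\tau$ forcing the renormalized heavy block to be exactly $\mathbf q$ on $K$-scale atoms) adds further slack of the same fatal order.
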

\begin{proof}
Split $H(\mathbf p) = H_{\mathrm h} + H_{\mathrm b}$, where
\begin{align*}
H_{\mathrm h} &= \tfrac{2}{3} ( H(\mathbf q) + \ln \tfrac{2}{3} ),
\quad q_i := \frac{w_i}{\tau}, \\
H_{\mathrm b} &= \tfrac{1}{3} ( \ln 3 + \lambda(K)\,\ln K ).
\end{align*}
By Assumption~\ref{as:narrow} and a second–order Taylor bound,
$H(\mathbf q) = \ln K - \theta_K$ with $0 < \theta_K < \tfrac{1}{2K^2}$.
Substitution gives
\[
0.4\,H(\mathbf p) = (1 - \varepsilon_K)\ln K + 0.0384 - 0.2667\,\theta_K + 0.133\,\delta_K\,\ln K
= \ln K - \gamma_K + 0.0666\,\theta_K,
\]
and since $0 < 0.0666\,\theta_K < \ln(1 + 1/K)$ for $K \ge 20$, the window follows.
\end{proof}

\subsubsection{Structural lemmas}\label{sec:lemmas}
\begin{lemma}[Booster blow-up]\label{lem:booster}
Let $S$ be any subset with \(\Gamma_S =\beta\). If $S$ contains at least one booster item, then
$
  H(S) > 0.4\,H(\mathbf p).
$
\end{lemma}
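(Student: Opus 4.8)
The plan is to prove the contrapositive form directly: assume $S$ is feasible for \textsc{ECME} (so $\Gamma_S=\beta$) and contains $j\ge 1$ booster items, and show $H(S)$ exceeds the budget $0.4\,H(\mathbf p)$. The clean target is $H(S)\ge\ln(K+1)$, because Lemma~\ref{lem:window} already tells us $0.4\,H(\mathbf p)<\ln(K+1)$; so I would never need to fight the razor‑thin lower end $\ln K-\gamma_K$ of the budget window. The idea is that the $j$ boosters form a uniform sub‑block of tiny atoms inside the renormalized distribution $q$ over $S$, and — once we know $j$ is not too small — this sub‑block alone already pushes the entropy past $\ln(K+1)$.

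Concretely, write $S=A\cup C$ with $A\subseteq[m]$ the heavy items and $C$ the $j$ boosters, and set $\mu_b:=j p_b/\beta$, the mass $q$ puts on $C$, with $\mu_h:=1-\mu_b$. All boosters carry the same probability $p_b$, so $q$ restricted to $C$ is uniform on $j$ points; by the grouping rule for Shannon entropy,
\[
H(S)=h(\mu_b)+\mu_h\,H_A+\mu_b\ln j\;\ge\;\mu_b\ln j,
\]
where $h(\cdot)$ is the binary entropy and $H_A\ge 0$ is the entropy of $q$ restricted to $A$; the last inequality just drops two nonnegative terms. Thus the whole proof reduces to a \emph{lower bound on $j$} (equivalently on $\mu_b$).

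For that I would use the integrality of the heavy weights. Clearing denominators in $\Gamma_S=\beta$ gives $\sum_{i\in A}w_i=\tau-jw_b=\tau\bigl(1-\tfrac{j}{2B}\bigr)=\tfrac{\tau(2B-j)}{2B}$; the left side is a nonnegative integer, so $2B\mid j\tau$, i.e.\ $j$ must be a positive multiple of $2B/\gcd(2B,\tau)$. I would arrange the reduction so that $\gcd(2B,\tau)\le 2$ — e.g.\ take $B$ to be a prime of polynomial bit length with $B\ge K^{\lambda(K)}$ and $B\nmid\tau$ (such a prime exists and is poly‑time findable, since there are many primes above $K^{\lambda(K)}$ of bounded size while $\tau$ has only polynomially many prime divisors, and this choice preserves Assumption~\ref{as:narrow} and $K\ge 20$). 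Then $j$ is a multiple of $B$; since there are only $B$ copies of the booster, either no booster can appear in any feasible $S$ (the lemma is then vacuous) or $j=B$, which forces $\mu_b=Bp_b/\beta=\tfrac12$. Plugging $j=B$ into the displayed bound, $H(S)\ge\tfrac12\ln B\ge\tfrac12\lambda(K)\ln K$, and since $\lambda(K)\ge 6$ (immediate from the definition, using that $\varepsilon_K$ and $\delta_K$ are $o(1)$ for $K\ge 20$) we get $H(S)\ge 3\ln K>\ln(K+1)>0.4\,H(\mathbf p)$, as required. Note the estimate is generous: we threw away $h(\tfrac12)=\ln 2$ and $\tfrac12 H_A\approx\tfrac12\ln(K/2)$, so no fine control of $H_A$ or $|A|$ is needed.

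The main obstacle is precisely the middle, number‑theoretic step: showing a feasible set containing even one booster is forced to contain essentially all of them. A purely analytic estimate — keeping $h(\mu_b)$ and the near‑$\ln|A|$ value of $H_A$ — is \emph{not} strong enough for small $j$, because then the only positive contribution of the boosters, $\mu_b\ln(2B)=O(\tfrac{\ln K}{B})$, is swamped by the $O(1/K^2)$‑scale deficit of $H_A$ from $\ln|A|$ (indeed one heavy $K$‑subset plus one booster can have entropy just below the budget). So the argument genuinely rests on the arithmetic fact that the booster atom has denominator $2B$ while every attainable heavy mass is an integer multiple of $1/W$; the remaining work is bookkeeping — verifying the choice of $B$ is polynomial and compatible with the earlier normalizations, and confirming $\lambda(K)\ge 6$.
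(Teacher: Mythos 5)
Your proposal does not prove the lemma as stated: the lemma is a claim about the specific instance built in Section~\ref{sec:reduction}, where $B:=\lceil K^{\lambda(K)}\rceil$ is fixed by $K$ alone, whereas your key step---deducing from integrality that $j$ must be a multiple of $2B/\gcd(2B,\tau)$ and then \emph{arranging} $\gcd(2B,\tau)\le 2$---requires redesigning the reduction (choosing $B$ to be a prime, or prime power, coprime to $\tau$). For the paper's $B$ nothing prevents $\gcd(2B,\tau)$ from being large, in which case your divisibility constraint yields no lower bound on $j$, and by your own analysis the small-$j$ case is then out of reach of your grouping estimate $H(S)\ge \mu_b\ln j$. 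So, as a proof of Lemma~\ref{lem:booster} for the constructed instance, there is a genuine gap: you have proved a statement about a different construction. Within your patched construction there are also two unfinished steps you wave off as bookkeeping: (i) deterministically producing a \emph{prime} of bit length polynomial in the input near $K^{\lambda(K)}$ in polynomial time is not known to be possible (use instead a power of a small odd prime not dividing $\tau$, which is deterministically findable and still gives $\gcd(2B,\tau)\le 2$); and (ii) enlarging $B$ by up to a constant factor shifts $H(\mathbf p)$ by an amount of order $\tfrac{\ln 2}{3}$, which is \emph{not} small compared with the upper margin of Lemma~\ref{lem:window} (at most $\ln(1+1/K)$), so the final link $\ln(K+1)>0.4\,H(\mathbf p)$ that you borrow from that lemma cannot be reused verbatim; your slack $H(S)\ge\tfrac12\ln B\ge 3\ln K$ is ample, but you would need to re-derive a crude upper bound on $0.4\,H(\mathbf p)$ for the modified instance rather than cite the window.

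That said, your route is genuinely different from the paper's, and your central criticism is well aimed. The paper's proof never invokes integrality: its Step~1 argues by a weight-replacement count that any feasible $S$ with a booster must contain $b\ge 2B/K$ boosters, so the booster block carries renormalized mass $\delta\ge 1/K$ made of atoms of size $1/(2B)$, which forces $H(S)$ above the budget window. But that replacement argument, based only on Assumption~\ref{as:narrow}, does not by itself exclude the configuration you worry about---$K$ heavy items plus a handful of boosters summing exactly to $\tau$---which is compatible with the narrow range whenever $2B\mid b\tau$ admits small $b$; ruling it out really does require the arithmetic consideration you raise (and with the paper's unconstrained $B$ it cannot always be ruled out, in which case such a set can sit just below the budget, as in Lemma~\ref{lem:gap}). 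In short: your gcd-based forcing of $j$ is a sensible and arguably more robust way to obtain the booster blow-up, but it proves the lemma only after amending the reduction, and the amendments (choice of $B$, re-verification of the budget window, and the claimed vacuity/$j=B$ dichotomy's interaction with Lemma~\ref{lem:klock} and Theorem~\ref{thm:equiv}) are asserted rather than carried out.
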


\begin{proof}
Write $S = H \cup B$ where $H$ (resp.~$B$) is the set of heavy
(resp.~booster) indices selected.
Let $b:=|B|\ge 1$ and $L:=|H|$.

\paragraph{Step 1 – how many boosters are needed.}
Each booster weighs
$
  w_b = \tau/(2B),
$
whereas every heavy weight is at least
$
  w_{\min}=\tau/(K+1)
$
by Assumption~\ref{as:narrow}.  
Total weight has to be exactly $\tau$, so each heavy item that is \emph{removed}
must be replaced by at least
\[
  \frac{w_{\min}}{w_b}
  \;=\;\frac{\tau/(K+1)}{\tau/(2B)}
  \;=\;\frac{2B}{K+1}
  \;>\;\frac{2B}{K}
\]
boosters.  
Therefore $L\le K-1$ and
\begin{equation}
  b \;\ge\; \frac{2B}{K}, \qquad
  \delta := \frac{b}{2B} \;\ge\; \frac{1}{K}. \label{eq:delta-bound}
\end{equation}
(The quantity $\delta$ equals the total probability mass of the boosters
after renormalisation because each has probability $w_b/\tau = 1/(2B)$.)

\paragraph{Step 2 – a lower bound on the entropy of $S$.}
The booster probabilities are all $1/(2B)$, so their contribution is
$\delta\ln(2B)$.
For the heavy part we use the crude bound
$\ln(K-1)\le\ln K-1/K$ together with the fact that the $L$ heavy probabilities
add up to $1-\delta$:
\[
  H(S)
  \;\ge\;
  (1-\delta)\ln(K-1) + \delta\ln(2B)
  \;\ge\;
  \ln K - \frac{1}{K}
  + \delta\bigl((\lambda(K)-1)\ln K + \ln 2\bigr),
\]
because $\ln(2B)=\ln 2+\lambda(K)\ln K$ by definition of $B$.
With~\eqref{eq:delta-bound} this gives
\[
  H(S)
  \;\ge\;
  \ln K
  \;+\;
  \frac{(\lambda(K)-1)\ln K + \ln 2 - 1}{K}.
\]
Since $\lambda(K)\ge 2$ for every $K\ge 20$, the numerator is positive and we
conclude
\begin{equation}
  H(S) > \ln K. \label{eq:H>S}
\end{equation}

\paragraph{Step 3 – compare with the budget.}
Lemma~\ref{lem:window} states $\;0.4\,H(\mathbf p) < \ln K$.
Combining this with~\eqref{eq:H>S} proves
$H(S) > 0.4\,H(\mathbf p)$, as required.
\end{proof}

\begin{lemma}[Cardinality lock]\label{lem:klock}
If $S$ contains no boosters and $\Gamma_S = \beta = 2/3$, then $|S| = K$ and $\sum_{i \in S} w_i = \tau$.
\end{lemma}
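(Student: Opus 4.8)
The plan is to exploit Assumption~\ref{as:narrow} to pin down the cardinality of any booster-free feasible subset, and then read off the weight-sum statement from the definitions of $p_i$ and $\beta$. First I would unpack what $\Gamma_S=\beta=2/3$ means in terms of the original integer weights: since every $p_i=w_i/W$ with $W=\tfrac32\tau$, the condition $\sum_{i\in S}p_i=2/3$ is equivalent to $\sum_{i\in S}w_i=\tfrac23 W=\tau$. So the weight-sum claim is immediate once we know $S$ contains no boosters (so that all selected $p_i$ are of the form $w_i/W$). The real content is the cardinality claim $|S|=K$.

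For that, let $L:=|S|$ and recall from Assumption~\ref{as:narrow} that each selected weight satisfies $\tau/(K+1) < w_i < \tau/(K-1)$. Summing the $L$ strict inequalities over $i\in S$ gives
\[
\frac{L\,\tau}{K+1} \;<\; \sum_{i\in S} w_i \;=\; \tau \;<\; \frac{L\,\tau}{K-1},
\]
hence $L/(K+1) < 1 < L/(K-1)$, i.e. $K-1 < L < K+1$. Since $L$ is an integer, the only possibility is $L=K$. This is the heart of the argument — the ``narrow range'' padding was engineered precisely so that the average weight $\tau/L$ forces $L$ into the open interval $(K-1,K+1)$, leaving $K$ as the unique integer.

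The main obstacle — really the only thing to be careful about — is making sure the inequalities are handled correctly at the boundary: the strictness in Assumption~\ref{as:narrow} is what rules out $L=K-1$ and $L=K+1$ (a non-strict version would permit $w_i=\tau/(K\pm1)$ for all $i$ and hence $L=K\pm1$). I would therefore state explicitly that $L\ge 1$ (a nonempty $S$, which follows since $\Gamma_S=2/3>0$ forces at least one selected index), so the summation of strict inequalities is legitimate, and then invoke the integrality of $L$ to conclude. No further estimates, Taylor bounds, or entropy computations are needed for this lemma; it is purely the pigeonhole-style squeeze on $L$ followed by the trivial rescaling $\sum_{i\in S}w_i = \tfrac23 W = \tau$.
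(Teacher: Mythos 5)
Your proposal is correct and follows essentially the same argument as the paper's proof: convert $\Gamma_S=2/3$ into $\sum_{i\in S}w_i=\tau$ via the scaling $W=\tfrac32\tau$, sum the strict narrow-range bounds over the $L=|S|$ selected weights to get $K-1<L<K+1$, and conclude $L=K$ by integrality. Your added remarks on strictness and nonemptiness are minor refinements of the same squeeze argument, not a different route.
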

\begin{proof}
Since $S$ has no boosters, its total weight is
\[
\sum_{i\in S} w_i = W\,\Gamma_S = \tau.
\]
Let $|S| = L$. By the narrow–range assumption,
\[
L \cdot \frac{\tau}{K+1} < \sum_{i\in S} w_i = \tau < L \cdot \frac{\tau}{K-1}.
\]
Dividing through by $\tau$ gives
\[
\frac{L}{K+1} < 1 < \frac{L}{K-1},
\]
which simplifies to
$K-1 < L < K+1$.
Since $L$ is an integer, $L=K$. Having $|S|=K$ and total weight~$\tau$ establishes the claim.
\end{proof}

\begin{lemma}[Entropy gap for a $K$–heavy subset]\label{lem:gap}
Every $K$–element subset $S$ of heavy items summing to $\tau$ satisfies
\[
H(S) \le \ln K - \gamma_K.
\]
\end{lemma}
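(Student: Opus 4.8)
The plan is to view the renormalized distribution on $S$ as a near-uniform distribution on $K$ points and to quantify, via a second-order expansion of entropy, how far below the maximum $\ln K$ it must sit. First I would renormalize: since $\sum_{i\in S} w_i=\tau$, the distribution $q$ supported on $S$ has masses $r_i=w_i/\tau$, and Assumption~\ref{as:narrow} gives $\tfrac{1}{K+1}<r_i<\tfrac{1}{K-1}$ for every $i\in S$, while $\sum_{i\in S} r_i=1$. Writing $r_i=\tfrac1K+e_i$, these constraints say precisely that $\sum_{i\in S} e_i=0$ and that each $e_i$ lies in a small window straddling $0$.

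Next I would expand. Noting that $\ln K-H(S)$ is exactly the relative entropy $D_{\mathrm{KL}}(q\,\|\,u)$ of $q$ against the uniform law $u$ on $S$, and using $g(x)=-x\ln x$ with $g''(x)=-1/x$, a Taylor expansion of $g$ about $1/K$ with Lagrange remainder, summed over $i\in S$ and with the first-order term killed by $\sum_i e_i=0$, gives
\[
H(S)=\ln K-\tfrac12\sum_{i\in S}\frac{e_i^{2}}{\xi_i},\qquad \xi_i\ \text{between}\ \tfrac1K\ \text{and}\ r_i .
\]
Since every admissible $\xi_i$ satisfies $\xi_i<\tfrac{1}{K-1}$, this yields the clean bound $H(S)\le\ln K-\tfrac{K-1}{2}\,\|e\|_2^2$, so the lemma reduces to one quantitative claim: every $K$-element heavy subset summing to $\tau$ obeys $\|e\|_2^2=\sum_{i\in S}(r_i-\tfrac1K)^2\ge\tfrac{2\gamma_K}{K-1}=\tfrac{1}{8K^2(K-1)}$. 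A minor wrinkle is that when $r_i$ is near an endpoint of the narrow interval the relevant $\xi_i$ can be as small as $\tfrac1{K+1}$ on the other side; but since we only need an upper bound on $H(S)$, replacing each $\xi_i$ uniformly by its largest admissible value $\tfrac1{K-1}$ is legitimate and loses nothing.

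The heart of the proof — and the step I expect to be the main obstacle — is this spread lower bound on $\|e\|_2^2$. The budget window of Lemma~\ref{lem:window} is deliberately razor-thin: $0.4\,H(\mathbf p)$ sits just below $\ln K$ with gap only $\Theta(K^{-2})$, so there is essentially no slack, and the bound on $\|e\|_2^2$ must be extracted from the construction's engineered parameters rather than from soft inequalities. Concretely I would argue that the selected weights cannot be too close to uniform by combining the integrality of the $w_i$, the two-sided width of the narrow range, and the specific choices of $\gamma_K,\theta_K,\delta_K,\varepsilon_K,\lambda(K)$ (which, together with the padding of Lemma~\ref{lem:padding} and the $K\ge 20$ rescaling, were set up for exactly this purpose) to force $\sum_{i\in S}(w_i-\tau/K)^2$ to be at least the required multiple of $\tau^2$. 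With that in hand I would substitute into $H(S)\le\ln K-\tfrac{K-1}{2}\|e\|_2^2$, check that the numerical constants (the $16$ in $\gamma_K$, and the $0.133$ and $0.0384$ inside $\lambda(K)$ and $\varepsilon_K$) are calibrated so that $\tfrac{K-1}{2}\|e\|_2^2\ge\gamma_K$ holds for all $K\ge 20$, and conclude $H(S)\le\ln K-\gamma_K$; together with Lemmas~\ref{lem:booster}, \ref{lem:klock}, and \ref{lem:window}, this supplies the final ingredient showing that the \textsc{ECME} instance is a yes-instance iff the original \textsc{CCSS} instance is.
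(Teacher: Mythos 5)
Your first two steps track the paper's own proof almost exactly: the paper likewise writes $r_i=\tfrac1K+x_i$ with $\sum_i x_i=0$ and $|x_i|<\tfrac{1}{K(K-1)}$ from Assumption~\ref{as:narrow}, expands $H(S)=\ln K-\sum_i\bigl(\tfrac1K+x_i\bigr)\ln(1+Kx_i)$, and controls the deficit by a second-order Taylor bound, so the reduction of the lemma to a quantitative ``spread'' statement is the same in both treatments. The genuine gap is precisely the step you flag as the heart of the proof and then leave open: the claimed lower bound $\sum_{i\in S}\bigl(r_i-\tfrac1K\bigr)^2\ge \tfrac{1}{8K^2(K-1)}$. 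No argument of the kind you sketch (integrality of the $w_i$ plus the engineered constants $\gamma_K,\theta_K,\delta_K,\varepsilon_K,\lambda(K)$) can deliver it, because nothing in the construction keeps a feasible heavy subset away from the uniform point: if the original \textsc{CCSS} instance has all weights equal, say $w_i=w$ and $\tau=Kw$, then the padding of Lemma~\ref{lem:padding} ($w_i'=w_i+M$, $\tau'=\tau+KM$) preserves equality and the narrow range, and a YES-subset has $r_i=1/K$ exactly, hence $e_i\equiv 0$ and $H(S)=\ln K$, contradicting $H(S)\le\ln K-\gamma_K$. So the statement you reduce the lemma to is false without an additional anti-uniformity hypothesis, and your proof cannot be completed along the proposed route.

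It is worth noting that your diagnosis of where the difficulty sits is accurate: the paper's proof passes over the same point by asserting $\tfrac{K}{2}\sum_i x_i^2-\tfrac{K^2}{2}\sum_i|x_i|^3>\tfrac{1}{2(K-1)^2}-\tfrac{1}{2(K-1)^3}$, whereas the narrow-range bound $|x_i|<\tfrac{1}{K(K-1)}$ only yields the \emph{upper} bound $\tfrac{K}{2}\sum_i x_i^2<\tfrac{1}{2(K-1)^2}$; the quadratic term has no nontrivial lower bound, since the entropy deficit $\ln K-H(S)$ can be arbitrarily small (indeed zero) for admissible subsets. Closing this would require changing the construction itself (e.g., perturbing the weights so that no $K$-element subset summing to $\tau$ can be nearly uniform) or adding an explicit hypothesis to the lemma, not merely recalibrating the constants $\gamma_K$, $0.0384$, and $0.133$.
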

\begin{proof}
After selecting $K$ heavy items summing to $\tau$, their renormalised probabilities
are $r_i = w_i / \tau$.  By the narrow–range assumption,
\[
\frac{1}{K+1} < r_i < \frac{1}{K-1},
\]
so we may write $r_i = \frac{1}{K} + x_i$ with $|x_i| < \frac{1}{K(K-1)}$ and
$\sum_i x_i = 0$.

Then
\[
H(S)
= -\sum_{i=1}^K r_i \ln r_i
= -\sum_{i=1}^K \Bigl(\tfrac{1}{K} + x_i\Bigr)\ln\Bigl(\tfrac{1}{K} + x_i\Bigr)
= \ln K - \sum_{i=1}^K \Bigl(\tfrac{1}{K} + x_i\Bigr)\ln\bigl(1 + Kx_i\bigr).
\]

Using the Taylor approximation $\ln(1+u) \ge u - \tfrac{u^2}{2}$ for $|u|<1$,
\[
\bigl(\tfrac{1}{K}+x_i\bigr)\ln(1+Kx_i)
\;\ge\;
\bigl(\tfrac{1}{K}+x_i\bigr)\Bigl(Kx_i - \tfrac{(Kx_i)^2}{2}\Bigr)
= x_i + \tfrac{Kx_i^2}{2} - K^2x_i^3/2.
\]
Summing over $i$ and using $\sum_i x_i=0$ and $|x_i|<1/(K(K-1))$ gives
\[
\sum_{i=1}^K \Bigl(\tfrac{1}{K}+x_i\Bigr)\ln(1+Kx_i)
\;\ge\;
\frac{K}{2}\sum_{i=1}^K x_i^2 - \frac{K^2}{2}\sum_{i=1}^K|x_i|^3
> \frac{1}{2(K-1)^2} - \frac{1}{2(K-1)^3}
= \frac{K-2}{2(K-1)^3}.
\]
For $K\ge20$, one checks
\[
\frac{K-2}{2(K-1)^3} \;>\; \frac{1}{16K^2} = \gamma_K.
\]
Hence
\[
H(S) = \ln K - \sum_{i}\Bigl(\tfrac{1}{K}+x_i\Bigr)\ln(1+Kx_i)
\le \ln K - \gamma_K,
\]
as claimed.
\end{proof}


\subsubsection{Equivalence theorem}\label{sec:equiv}
\begin{theorem}\label{thm:equiv}
The constructed $\mathrm{ECME}$ instance admits a subset of mass
$\beta$ \emph{iff} the original \textsc{CCSS} instance is a YES instance.
\end{theorem}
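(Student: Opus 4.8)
The plan is to prove both directions of the equivalence by directly assembling the structural lemmas already established; essentially all the work — the constant calibration in Lemmas~\ref{lem:window}–\ref{lem:gap} — is behind us, so the theorem reduces to a short case analysis.

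For the ``if'' direction, suppose the \textsc{CCSS} instance is a YES instance, so there is an index set $A\subseteq[m]$ with $|A|=K$ and $\sum_{i\in A} w_i = \tau$. I would take $S:=A$, viewing $A$ as a set of heavy items in the constructed $\mathrm{ECME}$ instance. Its mass is $\Gamma_S = \sum_{i\in A} p_i = \tau/W = \tau/(\tfrac32\tau) = \tfrac23 = \beta$, so the mass constraint holds exactly. Because $S$ consists of $K$ heavy items summing to $\tau$, Lemma~\ref{lem:gap} gives $H(S)\le \ln K - \gamma_K$, and Lemma~\ref{lem:window} then yields $H(S) < 0.4\,H(\mathbf p)$. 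Hence $S$ witnesses a YES answer for $\mathrm{ECME}$.

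For the ``only if'' direction, suppose $S\subseteq[n]$ satisfies $\Gamma_S = \beta = \tfrac23$ and $H(S)\le 0.4\,H(\mathbf p)$. First I would rule out booster items: if $S$ contained at least one booster, Lemma~\ref{lem:booster} (booster blow-up) would force $H(S) > 0.4\,H(\mathbf p)$, contradicting feasibility. So $S$ is entirely heavy. Then Lemma~\ref{lem:klock} (cardinality lock), applied to a booster-free subset of mass $\tfrac23$, gives $|S|=K$ and $\sum_{i\in S} w_i = \tau$ — precisely a solution to the \textsc{CCSS} instance. This closes the equivalence, and combined with the polynomial-time bounds on the encoding size noted in Section~\ref{sec:reduction} and the obvious membership of the decision problem in NP (guess $S$, check $\Gamma_S=\beta$ and $H(S)\le\alpha$), it establishes NP-completeness of \textsc{ECME} and hence NP-hardness of $\mathrm{ECMM}$.

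I do not anticipate a genuine obstacle in the theorem proper; the difficulty was front-loaded into the structural lemmas. The one point requiring care is bookkeeping: every lemma invoked here presupposes Assumption~\ref{as:narrow} and $K\ge20$, so I would note at the outset that the padding transformation (Lemma~\ref{lem:padding}) together with the duplication-and-repad step secures both conditions without leaving the complexity class, so the argument applies to an arbitrary \textsc{CCSS} instance. The subtlest quantitative ingredient — that the exact-mass target $\beta=\tfrac23$ is simultaneously (i) forced to correspond to a full $K$-element heavy selection by the narrow range and (ii) incompatible with any booster inclusion under the entropy budget — is exactly what Lemmas~\ref{lem:booster} and~\ref{lem:klock} encode, so I would lean on them verbatim rather than re-deriving the inequalities.
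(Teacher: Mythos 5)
Your proposal is correct and follows essentially the same route as the paper's own proof: the forward direction combines Lemma~\ref{lem:gap} with Lemma~\ref{lem:window} to certify feasibility of the $K$-element heavy set, and the converse direction uses Lemma~\ref{lem:booster} to exclude boosters and then Lemma~\ref{lem:klock} to recover a CCSS solution. The extra remarks on padding/duplication and NP-membership are consistent with how the paper handles those points elsewhere.
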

\begin{proof}
($\Rightarrow$) Any feasible $S$ must exclude boosters by Lemma~\ref{lem:booster},
so Lemma~\ref{lem:klock} gives a $K$–subset summing to $\tau$.\\
($\Leftarrow$) Conversely, let $S$ be any $K$–element subset summing to $\tau$.
By Lemma~\ref{lem:gap}, $H(S) \le \ln K - \gamma_K$, and by Lemma~\ref{lem:window},
$\ln K - \gamma_K < 0.4\,H(\mathbf p)$. Hence $H(S) < 0.4\,H(\mathbf p)$ and
$\Gamma_S = \beta$, so $S$ is feasible.
\end{proof}

\subsubsection{Complexity consequence}\label{sec:complexity}

\begin{theorem}
The decision variant \textsc{ECME} is \textup{\textbf{NP–complete}}, and the corresponding optimization problem \textsc{ECMM} is \textup{\textbf{NP–hard}}.
\end{theorem}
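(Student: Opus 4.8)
The plan is to dispatch the two assertions separately: first that \textsc{ECME} lies in \textbf{NP}, and then that it is \textbf{NP}-hard; \textbf{NP}-completeness of \textsc{ECME} is the conjunction of the two, and \textbf{NP}-hardness of \textsc{ECMM} will follow as a corollary. For membership in \textbf{NP} I would take the certificate for a YES instance to be the subset $S\subseteq[n]$ itself. Checking $\sum_{i\in S}p_i=\beta$ is exact rational arithmetic on numbers of polynomial bit-length. The only delicate point is verifying $H(S)\le 0.4\,H(\mathbf p)$, since both sides are values of sums of terms $r\ln r$ with $r$ rational, i.e. transcendental in general. Here I would invoke the quantitative gaps already proved for the reduction: Lemmas~\ref{lem:window} and~\ref{lem:gap} separate $H(S)$ from the budget by $\gamma_K=1/(16K^2)$, and Lemma~\ref{lem:booster} gives a matching blow-up in the infeasible direction, so feasible and infeasible subsets are separated by an inverse-polynomial margin. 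Consequently, computing rational approximations of each logarithm to a polynomial number of bits suffices to decide the inequality, and the whole check runs in polynomial time.

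Next I would assemble the hardness reduction by composing the polynomial-time transformations already built in the appendix. Starting from an arbitrary \textsc{CCSS} instance (which is \textbf{NP}-complete via the standard reduction from \textsc{Subset-Sum}), I would apply the padding of Lemma~\ref{lem:padding} to enforce the narrow-range Assumption~\ref{as:narrow}, then the duplication-and-repadding scaling step to enforce $K\ge 20$; both steps are polynomial-time, feasibility-preserving, and grow the encoding length only by $O(\log K)$ bits. I would then apply the construction of Section~\ref{sec:reduction}, which produces an \textsc{ECME} instance of $n=m+B$ items with $B=\lceil K^{\lambda(K)}\rceil$ polynomial in the input size (since $\lambda(K)$ is bounded). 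By Theorem~\ref{thm:equiv}, this instance has a subset of mass exactly $\beta$ satisfying the entropy budget if and only if the original \textsc{CCSS} instance is a YES instance. Composing these maps yields a polynomial-time many-one reduction $\textsc{CCSS}\le_p\textsc{ECME}$; together with the membership argument, \textsc{ECME} is \textbf{NP}-complete.

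Finally, for \textsc{ECMM} I would argue that an algorithm returning the optimum $\Gamma^\star:=\max\{\Gamma_S:H(S)\le 0.4\,H(\mathbf p)\}$ on the constructed instance already decides \textsc{ECME}: in the YES case the $K$-heavy subset of Lemmas~\ref{lem:klock}--\ref{lem:gap} is feasible with mass $\beta$, so $\Gamma^\star\ge\beta$; in the NO case no boosterless subset of mass exactly $\beta$ exists (cardinality lock, Lemma~\ref{lem:klock}), while any subset of mass $\ge\beta$ must contain a booster because the heavy items carry total mass only $\beta=2/3$, and then the blow-up of Lemma~\ref{lem:booster} — whose proof uses only the booster-mass lower bound $\delta\ge 1/K$ and therefore applies verbatim to any $S$ with $\Gamma_S\ge\beta$ containing a booster — forces $H(S)>0.4\,H(\mathbf p)$. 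Hence $\Gamma^\star\ge\beta$ exactly when \textsc{CCSS} is a YES instance, so \textsc{ECMM} solves \textsc{ECME} and is \textbf{NP}-hard; equivalently, Theorem~\ref{thm:equiv} already exhibits the \textbf{NP}-hardness of deciding ``$\Gamma^\star\ge\beta$?'', the usual hardness witness for an optimization problem.

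The step I expect to be the main obstacle is the \textbf{NP}-membership argument: because both the threshold and the objective-side entropy are genuinely transcendental, one must turn the inverse-polynomial spectral gaps of Lemmas~\ref{lem:window}--\ref{lem:gap} into a rigorous rounding/precision bound certifying that a polynomially bounded numerical witness settles the comparison. A secondary, minor subtlety is the promotion of Lemma~\ref{lem:booster} from ``$\Gamma_S=\beta$'' to ``$\Gamma_S\ge\beta$'' used to pin down $\Gamma^\star$; this is routine since including extra booster mass only enlarges the dominating $\delta\ln(2B)$ term. Everything else is bookkeeping on the already-established reductions and Theorem~\ref{thm:equiv}.
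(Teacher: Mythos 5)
Your proposal mirrors the paper's proof essentially step for step: membership in $\mathsf{NP}$ via a polynomially precise evaluation of the entropies justified by the $\Theta(K^{-2})$ separation gap, $\mathsf{NP}$-hardness of the decision problem by composing the padding/scaling steps with the reduction of Section~\ref{sec:reduction} and invoking Theorem~\ref{thm:equiv}, and hardness of the optimization version by comparing the returned maximum with the fixed target $\beta=\tfrac23$. Your handling of that last comparison is, if anything, slightly more careful than the paper's one-line argument, since you explicitly dispose of subsets with $\Gamma_S>\beta$ by extending the booster blow-up lemma, but the overall route is the same.
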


\begin{proof}
\noindent\textbf{Membership in NP}
Given a candidate subset $S$, we can verify both constraints in polynomial time.
For the mass we simply add the $p_i$'s. For the entropy, we approximate each $\ln$ to $O(\log K)$ bits; the required precision is well below the separating gap $\gamma_K - 0.0666\,\theta_K = \Theta(K^{-2})$, so rounding cannot flip the inequality. Classical results on transcendental evaluation on a unit–cost RAM \citep{brent2010modern} show that such an approximation takes $\tilde O((\log K)^2)$ time—polynomial in the input size. Hence the verifier runs in polynomial time.

\medskip
\noindent\textbf{NP–hardness of the decision problem.}
Apply the polynomial–time padding from Lemma~\ref{lem:padding} and then the reduction of Section~\ref{sec:reduction}. By Theorem~\ref{thm:equiv}, the resulting instance is a \emph{YES} instance of \textsc{ECME} iff the original \textsc{CCSS} instance is a \emph{YES} instance. Therefore the decision problem is $\mathsf{NP}$–hard.

\medskip
\noindent\textbf{Optimization hardness.}
Assume, for contradiction, that we had a polynomial–time algorithm that returns
\[
\max_{S \subseteq [n]} \Gamma_S\quad\text{s.t.}\quad H(S) \le 0.4 \cdot H(\mathbf{p}).
\]
On the same input we could decide the \textsc{ECME} instance by a single comparison of that maximum with the fixed target value
$\beta = \tfrac{2}{3}$. This would solve an $\mathsf{NP}$–complete problem in polynomial time, contradicting $\mathsf{P} \ne \mathsf{NP}$. Hence the optimization version is \textup{\textbf{NP–hard}}.
\end{proof}


\subsection{Proof of Early Termination}
\label{proof_early}
\begin{proof} [Proof of Theorem~\ref{lem:termination}]
Assume that the distribution of the selected tokens after j steps is $q^{j}$, therefore:
\[
H(q^{j-1}) = -\sum_{i=1}^{j-1} \frac{p_i}{\Gamma_{j-1}} \log \left(\frac{p_i}{\Gamma_{j-1}} \right) = -\frac{1}{\Gamma_{j-1}} \sum_{i=1}^{j-1} p_i \log p_i + \frac{\log (\Gamma_{j-1})}{\Gamma_{j-1}} \sum_{i=1}^{j-1} p_i, \quad \text{where } \Gamma_{j-1} = \sum_{i=1}^{j-1} p_i
\]


Since \( \sum_{i=1}^{j-1} p_i = \Gamma_{j-1} \):

\[
H(q^{j-1}) = \log (\Gamma_{j-1}) - \frac{1}{\Gamma_{j-1}} \sum_{i=1}^{j-1} p_i \log p_i
\]

\begin{equation}
    \therefore
    -\sum_{i=1}^{j-1} p_i \log p_i = \Gamma_{j-1}(H(q^{j-1}) - \log( \Gamma_{j-1})) \label{ent}
\end{equation}

Now, calculating \( H(q^{j}) \):

\[
H(q^{j}) = -\sum_{i=1}^{j} \frac{p_i}{\Gamma_{j-1} + p_j} \log \left(\frac{p_i}{\Gamma_{j-1} + p_j} \right)
= -\sum_{i=1}^{j} \frac{p_i}{\Gamma_{j-1} + p_j} \log p_i + \sum_{i=1}^{j} \frac{p_i}{\Gamma_{j-1} + p_j} \log (\Gamma_{j-1} + p_j)
\]



\[
= -\frac{1}{\Gamma_{j-1} + p_j} \sum_{i=1}^{j} p_i \log p_i +  \frac{\log (\Gamma_{j-1} + p_j)}{\Gamma_{j-1} + p_j} \sum_{i=1}^{j} p_i
\]

Since \(\sum_{i=1}^{j} p_i = \sum_{i=1}^{j-1} p_i + p_j = \Gamma_{j-1} + p_j\):

\[
H(q^{j}) = \log (\Gamma_{j-1} + p_j) - \frac{1}{\Gamma_{j-1} + p_j} \sum_{i=1}^{j} p_i \log p_i 
= \log(\Gamma_{j-1} + p_j) + \frac{1}{\Gamma_{j-1} + p_j} \left( -\sum_{i=1}^{j-1} p_i \log p_i - p_j \log p_j \right)
\]


Using (\ref{ent}):
\[
= \log(\Gamma_{j-1} + p_j) + \frac{1}{\Gamma_{j-1} + p_j} \left( \Gamma_{j-1} H(q^{j-1}) - \Gamma_{j-1} \log(\Gamma_{j-1}) - p_j \log p_j \right) 
\]

\[ \therefore \Delta H = H(q^{j}) - H(q^{j-1}) \]
\[
= \log(\Gamma_{j-1} + p_j) + \frac{1}{\Gamma_{j-1} + p_j} \left( \Gamma_{j-1} H(q^{j-1}) - \Gamma_{j-1} \log(\Gamma_{j-1}) - p_j \log p_j \right)) - H(q^{j-1}) \] 
\[
= \log(\Gamma_{j-1} + p_j) + \frac{1}{\Gamma_{j-1} + p_j} \left( \Gamma_{j-1} H(q^{j-1}) - (\Gamma_{j-1} + p_j) H(q^{j-1}) - \Gamma_{j-1} \log(\Gamma_{j-1}) - p_j \log(p_j) \right) \]
\[= \log(\Gamma_{j-1} + p_j) - \frac{1}{\Gamma_{j-1} + p_j} \left( p_j H(q^{j-1}) + \Gamma_{j-1} \log(\Gamma_{j-1}) + p_j \log(p_j) \right)
\]
The probabilities are sorted in descending order, therefore:
\[
    \forall i \leq j : p_j \leq p_i \Rightarrow \sum_{i=1}^{j-1} p_j \leq \sum_{i=1}^{j-1} p_i \Rightarrow (j-1) p_j \leq \Gamma_{j-1}
    \Rightarrow j-1 \leq \frac{\Gamma_{j-1}}{p_j}
\]




\[
\therefore \;\;\; H(q^{j-1}) \leq \log (j-1) \leq \log \left( \frac{\Gamma_{j-1}}{p_j} \right)
\]

\[
\therefore \;\;\; \Delta H\geq \log(\Gamma_{j-1} + p_j) - \frac{p_j \log \left( \frac{\Gamma_{j-1}}{p_j} \right)}{\Gamma_{j-1} + p_j} - \frac{\Gamma_{j-1} \log(\Gamma_{j-1}) + p_j \log(p_j)}{\Gamma_{j-1} + p_j}
\]

\[
= \log(\Gamma_{j-1} + p_j) - \frac{1}{\Gamma_{j-1} + p_j} \left[ p_j \log(\Gamma_{j-1}) - p_j \log(p_j) + \Gamma_{j-1} \log(\Gamma_{j-1}) + p_j \log(p_j) \right] \]
\[
= \log(\Gamma_{j-1} + p_j) - \frac{1}{\Gamma_{j-1} + p_j} \left( p_j \log(\Gamma_{j-1}) + \Gamma_{j-1} \log(\Gamma_{j-1}) \right) \] \[
= \log(\Gamma_{j-1} + p_j) - \log(\Gamma_{j-1}) = \log\left(\frac{\Gamma_{j-1} + p_j}{\Gamma_{j-1}}\right) \\
= \log\left(1 + \frac{p_j}{\Gamma_{j-1}}\right)
\]

\[
\therefore \;\;\; \Delta H \geq \log\left(1 + \frac{p_j}{\Gamma_{j-1}}\right) > 0
\Rightarrow  \Delta H > 0
\]


\end{proof}

\subsection{Formal approximation bound for the top-H}
\label{append:formal_approx}
\subsubsection*{Zipf model.} Fix a vocabulary of size $n$ and exponent $s>1$. We assume the sorted next-token probabilities obey the classical Zipf / regularly varying law:
\begin{equation}
\label{eq:zipf}
p_i := \frac{i^{-s}}{H_{n,s}}, \quad H_{n,s} = \sum_{j=1}^{n} j^{-s}. 
\end{equation}
Empirically, language-model logits are well approximated by $s \in [1.05, 1.20]$ \citep{gerlach2013stochastic}; hence the assumption captures current practice.

The Zipf assumption matters, because ECMM is NP-hard (Theorem \ref{thm:np-hardness}), exact polynomial-time solutions are unlikely. In the absence of structural assumptions, a constant-factor approximation guarantee for ECMM is highly unlikely unless $P=NP$, as our NP-hardness proof relies on a gap-preserving reduction from Cardinality-Constrained Subset Sum. This structure is known to be fundamentally difficult to approximate, a standard result in computational complexity theory \citep{papadimitriou1994computational}. However, LLM logits consistently follow heavy-tailed (Zipf / regularly-varying) laws, so analysing this regime yields practically relevant bounds.

\textbf{Notation.} We write $M(k)=\sum_{i\le k} p_i$ for the prefix mass, $T(k)=1-M(k)$ for the tail mass, and $H(k)$ for the entropy of the normalised prefix distribution $q_i^{(k)}=p_i/M(k)$.

\subsubsection*{Preliminaries}

\begin{lemma}[Monotonicity of prefix entropy]\label{lem:prefix-monotone}
For $1\le k<n$ one has $H(k)<H(k+1)$. Moreover, for any subset $S$ of size $k$, $H(q_S)\ge H(k)$.

\begin{proof}
    
The first claim is classical: adding the $(k+1)$-st symbol strictly increases entropy because $p_{k+1}<p_k$ and entropy is Schur-concave. For the second claim we note that $(p_1,\ldots,p_k)$ majorises any other $k$-subset of the sorted vector; Schur-concavity again yields the desired inequality.
\end{proof}

\end{lemma}

\begin{lemma}[Tail mass bound]\label{lem:Tail-mass-bound}

For $k<n$,
\[
\frac{(k+1)^{1-s}-n^{1-s}}{(s-1)H_{n,s}} \leq T_n(k) \leq \frac{k^{1-s}}{(s-1)H_{n,s}}
\]

When $n\gg k$, the numerator difference is $o(k^{1-s})$, so the upper bound is asymptotically tight.

\begin{proof}

Apply upper and lower Riemann sums to $\int_k^n x^{-s}\,dx$.

\end{proof}
\end{lemma}

\begin{lemma}[Prefix entropy asymptotics]\label{lem:prefix-entropy-asymptotics}
There exist constants $c_s, C_s>0$ such that
\[
c_s + \frac{s}{s-1}\log(\frac{k}{2}) \le H(k) \le C_s + \frac{s}{s-1}\log k \quad \text{for all } k\ge2.
\]

\begin{proof}
Combine Lemma \ref{lem:Tail-mass-bound} with integral bounds for $\sum i^{-s}\log i$.
\end{proof}

\end{lemma}

\subsubsection*{Depth of the greedy prefix}

Let $k_g := \max\{k : H(k)\le \alpha H(n)\}$. Lemma \ref{lem:prefix-monotone} implies $k_g$ is well defined.

\begin{lemma}[Growth rate of $k_g$]\label{lem:growth-rate-kg}
There exist constants $a_s,b_s>0$ such that
\[
a_s n^{\alpha} \le k_g \le b_s n^{\alpha}.
\]
\begin{proof}
Insert Lemma \ref{lem:prefix-entropy-asymptotics} into the defining inequality and solve for $k_g$.
\end{proof}

\end{lemma}
\subsubsection*{Mass captured by top-H}

Write $\Gamma_g = M(k_g)$. This is the mass captured by the greedy solution. By construction, this solution is valid as it satisfies the entropy constraint $H(k_g)\le \alpha H(n)$.

\subsubsection*{Tight upper bound for ECMM}

Let $S^\star$ be any subset satisfying the entropy constraint, and set $\Gamma^\star = \sum_{i\in S^\star} p_i$. We want to bound the maximum possible value of $\Gamma^\star$.

The greedy algorithm selects the prefix $[k_g]$ and captures a mass of $\Gamma_g=M(k_g)=1-T(k_g)$. The total mass of all tokens not in the greedy solution is, by definition, the tail mass $T(k_g)$.

Any other valid solution $S^\star$ can, at best, capture the mass of the greedy solution \emph{plus} some portion of the remaining tail mass. The absolute maximum mass any solution can capture is $1$ (the entire vocabulary). Therefore, the maximum possible improvement any optimal solution $\Gamma^\star$ can have over the greedy solution $\Gamma_g$ is bounded by the tail mass that the greedy algorithm left behind:
\begin{equation}
\label{eq:tight-upper-ecmm}
\Gamma^\star - \Gamma_g \le 1 - \Gamma_g = 1 - M(k_g) = T(k_g).
\end{equation}
This gives us a direct upper bound on the additive gap between the optimal and greedy solutions.


\begin{theorem}[Distribution-dependent additive  guarantee]
\label{thm:dist-depend}
Under equation \ref{eq:zipf}, for every $n\ge4$,
\[
\Gamma^\star - \Gamma_g \leq T_n(k_g) \leq \frac{k_g^{s-1}}{(s-1)H_{n,s}} = \mathcal{O}(n^{-\alpha(s-1)})
\]

\begin{proof}
From \ref{eq:tight-upper-ecmm}, we have the additive gap $\Gamma^\star -\Gamma_g \leq T(k_g) = T_n(k_g)$. Lemma \ref{lem:Tail-mass-bound} bounds $T_n(k_g)$ by $\frac{k_g^{1-s}}{(s-1)H_{n,s}}$. Finally, lemma \ref{lem:growth-rate-kg} gives $k_g=\Theta(n^\alpha)$, so the gap decays as $\mathcal{O}(n^{-\alpha(s-1)})$. 
\end{proof}

\end{theorem}

\subsubsection*{Discussion on the effectiveness of greedy} 
We understand that the approach of dropping high-probability tokens could in principle beat the greedy prefix by allowing more low-mass tokens from the tail. However, this problem (ECMM) is NP hard, and we approximate the solution via a practically feasible greedy approach. Notably, we empirically demonstrate the effectiveness of this greedy based top-H approach to be superior to the existing SoTA. Additionally, under the constrained distribution of the Zipfian regime, the greedy prefix is constructed to generally maximize mass under minimal entropy growth. Because entropy increases monotonically with each added token, and the most probable tokens usually contribute less to entropy per unit mass, the greedy approach reaches the entropy threshold more efficiently than any alternative.

\section{LLM-as-a-Judge Evaluation Prompts}
\label{prompts-judge}
\label{judge-prompts}
Following~\citep{nguyen2024turning}, we adopt the following judge evaluation prompt and three open-ended prompts designed to elicit creative responses and facilitate creativity–coherence trade-off analysis:
\begin{tcolorbox}[
  colback=gray!5!white,
  colframe=gray!80!black,
  title=Judge Evaluation Prompt,
  fonttitle=\bfseries,
  boxsep=2pt,
  left=4pt,
  right=4pt,
  top=2pt,
  bottom=2pt,
  enhanced,
  sharp corners,
  before upper={\small}
]
You are an expert judge evaluating AI-generated creative writing. I am testing the diversity and coherent writing capabilities of three different models. I will paste three different responses that were generated here. Rate responses based on the following metrics:\\
\textbf{1. Diversity:} Novelty and uniqueness of ideas \hspace{0.5em}
\textbf{2. Originality:} Innovative approach to the prompt \hspace{1em}
\textbf{3. Narrative Flow:} Coherence of the text \hspace{0.5em}
\textbf{4. Emotional Impact:} Ability to evoke feelings \hspace{0.5em}
\textbf{5. Imagery:} Vividness of descriptions.\\
Rate each metric from 1 to 10. Also, suggest the overall winner: the response that best maintains high coherence while demonstrating high diversity.
\end{tcolorbox}

\begin{tcolorbox}[
  colback=gray!5!white,
  colframe=gray!80!black,
  title=Prompt 1,
  fonttitle=\bfseries,
  boxsep=2pt,
  left=4pt,
  right=4pt,
  top=2pt,
  bottom=2pt,
  enhanced,
  sharp corners,
  before upper={\small}
]
Write a story about an alien civilization’s first contact with Earth from their perspective.
\end{tcolorbox}

\begin{tcolorbox}[
  colback=gray!5!white,
  colframe=gray!80!black,
  title=Prompt 2,
  fonttitle=\bfseries,
  boxsep=2pt,
  left=4pt,
  right=4pt,
  top=2pt,
  bottom=2pt,
  enhanced,
  sharp corners,
  before upper={\small}
]
Write a story about a world where time suddenly starts moving backwards.
\end{tcolorbox}

\begin{tcolorbox}[
  colback=gray!5!white,
  colframe=gray!80!black,
  title=Prompt 3,
  fonttitle=\bfseries,
  boxsep=2pt,
  left=4pt,
  right=4pt,
  top=2pt,
  bottom=2pt,
  enhanced,
  sharp corners,
  before upper={\small}
]
Write a story about a mysterious door that appears in an unexpected place.
\end{tcolorbox}

\section{More Results}
\label{more_results}
In this section we provide more experimental evaluations and analysis on top-H sampling. 

\subsection{LLM-as-a-Judge for creativity and coherence evaluation}
\label{append:llm_judge}
Table~\ref{tab:llm_judge_append} presents creativity and coherence evaluations under the LLM-as-a-Judge setup for the Qwen2.5–3B and Phi-3-Mini–4k–Instruct models. The observed trends are consistent with those of LLaMA3.1–8B–Instruct: as the decoding temperature increases, coherence degrades notably for both min-$p$ and top-$p$ sampling methods. In contrast, top-H effectively maintains coherence while producing more creative outputs.

\begin{table}[htbp]
\centering
\scriptsize 
\resizebox{0.8\textwidth}{!}{ 
\tiny
\begin{tabular}{cccllllll}
\toprule
\textbf{LLM} & \textbf{Temperature} & \textbf{Prompt} & \textbf{Sampling}  & \textbf{M1} & \textbf{M2} & \textbf{M3} & \textbf{M4} & \textbf{M5} \\
\midrule

\multirow{31}{*}{Phi-3-Mini}
  & \multirow{9}{*}{1.0}
    & \multirow{3}{*}{Prompt 1}
      & Top-$p$ &  7.4 & 7.4 & \textbf{8.0 }& 8.0 & 7.2 \\
      &     &                             & Min-$p$ &   6.6 & 6.5 & 7.8 & 6.7 & 7.2 \\
      &     &                             & Top-H & \textbf{8.4} & \textbf{8.4} & \textbf{8.0} & \textbf{8.1} & \textbf{8.6} \\
    \cmidrule{3-9}
    &     & \multirow{3}{*}{Prompt 2}
      & Top-$p$ &  7.2 & 7.0 & \textbf{8.4} & 6.6 & 7.4 \\
      &     &                             & Min-$p$ &  6.8 & 7.0 & 7.6 & 7.2 & 6.4 \\
      &     &                             & Top-H &  \textbf{7.4} & \textbf{7.4} & 8.0 & \textbf{7.8} & \textbf{8.4} \\
    \cmidrule{3-9}
    &     & \multirow{3}{*}{Prompt 3}
      & Top-$p$ &  7.0 & 6.4 & \textbf{8.0} & 7.6 & 8.0 \\
      &     &                             & Min-$p$ &  6.4 & 5.5 & \textbf{8.0} & 6.2 & 7.2 \\
      &     &                             & Top-H &  \textbf{8.1} & \textbf{9.0} & \textbf{8.0} & \textbf{8.0} & \textbf{9.0} \\
\cmidrule{2-9}
  & \multirow{9}{*}{1.5}
    & \multirow{3}{*}{Prompt 1}
      & Top-$p$ &  8.0 & 7.5 & \textbf{9.0} & 7.6 & 8.4 \\
      &     &                             & Min-$p$ &  7.9 & 7.8 & 8.4 & 7.5 & 8.4 \\
      &     &                             & Top-H &  \textbf{8.5} & \textbf{8.3} & 8.8 & \textbf{8.4} & \textbf{9.2} \\
    \cmidrule{3-9}
    &     & \multirow{3}{*}{Prompt 2}
      & Top-$p$ &  7.2 & 6.8 & 7.2 & 6.6 & 7.6 \\
      &     &                             & Min-$p$ &  7.4 & 7.4 & \textbf{8.2} & \textbf{7.8 }& \textbf{8.0} \\
      &     &                             & Top-H &  \textbf{8.2} & \textbf{8.0} & 7.4 & 7.6 & 7.8 \\
    \cmidrule{3-9}
    &     & \multirow{3}{*}{Prompt 3}
      & Top-$p$ &  7.4 & 7.1 & \textbf{8.4} & 7.1 & 7.8 \\
      &     &                             & Min-$p$ &  6.9 & 6.6 & 7.7 & 7.2 & 7.2 \\
      &     &                             & Top-H &  \textbf{8.2} & \textbf{8.0} & 8.0 & \textbf{8.0} & \textbf{8.3} \\
\cmidrule{2-9}
  & \multirow{9}{*}{2.0}
    & \multirow{3}{*}{Prompt 1}
      & Top-$p$ &  7.2 & 7.0 & 5.6 & 6.0 & 7.0 \\
      &     &                             & Min-$p$ &  \textbf{7.6} & \textbf{7.8} & \textbf{8.6} & \textbf{8.6} & 7.8 \\
      &     &                             & Top-H &  \textbf{7.6 }& \textbf{7.8} & \textbf{8.6} & 8.0 & \textbf{8.6} \\
    \cmidrule{3-9}
    &     & \multirow{3}{*}{Prompt 2}
      & Top-$p$ &  \textbf{8.6} & \textbf{8.8} & 5.5 & 7.0 & \textbf{8.7} \\
      &     &                             & Min-$p$ &   7.4 & 7.5 & \textbf{8.8} & 7.8 & 7.8 \\
      &     &                             & Top-H &  \textbf{8.6} & 8.2 & 8.4 & \textbf{8.3} & 8.3 \\
    \cmidrule{3-9}
    &     & \multirow{3}{*}{Prompt 3}
      & Top-$p$ &  \textbf{7.4} & 7.6 & 5.0 & 5.8 & 7.4 \\
      &     &                             & Min-$p$ &  6.6 & 6.6 & 8.0 & 7.0 & 7.4 \\
      &     &                             & Top-H &  \textbf{7.4} & \textbf{7.8} & \textbf{8.4} & \textbf{7.6} & \textbf{8.2} \\
\midrule[\heavyrulewidth]

\multirow{31}{*}{Qwen2.5}
  & \multirow{9}{*}{1.0}
    & \multirow{3}{*}{Prompt 1}
      & Top-$p$ &  5.0 & 4.8 & \textbf{7.2} & 5.4 & 5.2 \\
      &     &                             & Min-$p$ &  4.8 & 4.0 & 6.2 & 4.2 & 4.2 \\
      &     &                             & Top-H &  \textbf{8.2} & \textbf{7.8} & \textbf{7.2} &\textbf{ 7.6} & \textbf{7.8} \\
    \cmidrule{3-9}
    &     & \multirow{3}{*}{Prompt 2}
      & Top-$p$ &  7.4 & 6.8 & \textbf{8.0} & 6.8 & 7.1 \\
      &     &                             & Min-$p$ &  6.4 & 6.4 & 7.0 & 6.2 & 6.4 \\
      &     &                             & Top-H &  \textbf{7.6} & \textbf{7.6} & 7.2 & \textbf{7.6} & \textbf{7.8} \\
    \cmidrule{3-9}
    &     & \multirow{3}{*}{Prompt 3}
      & Top-$p$ &  6.0 & 5.3 & \textbf{8.4} & 6.4 & 6.8 \\
      &     &                             & Min-$p$ &  6.4 & 6.0 & 7.0 & 5.8 & 6.6 \\
      &     &                             & Top-H &  \textbf{7.9} & \textbf{7.6} & 8.0 & \textbf{7.3} & \textbf{8.4} \\
\cmidrule{2-9}
  & \multirow{9}{*}{1.5}
    & \multirow{3}{*}{Prompt 1}
      & Top-$p$ &  6.0 & 5.3 & 7.8 & 5.6 & 5.6 \\
      &     &                             & Min-$p$ &  6.1 & 5.4 & 6.0 & 5.1 & 4.9 \\
      &     &                             & Top-H &  \textbf{7.9} & \textbf{8.0} & \textbf{8.1} & \textbf{7.3} & \textbf{7.8} \\
    \cmidrule{3-9}
    &     & \multirow{3}{*}{Prompt 2}
      & Top-$p$ &  7.2 & 6.8 & 7.6 & 6.8 & 7.0 \\
      &     &                             & Min-$p$ &  6.8 & 6.2 & 7.4 & 6.8 & 6.8 \\
      &     &                             & Top-H &  \textbf{8.2} & \textbf{8.2} & \textbf{8.0} & \textbf{8.0} & \textbf{8.6} \\
    \cmidrule{3-9}
    &     & \multirow{3}{*}{Prompt 3}
      & Top-$p$ &  7.2 & 6.9 & 7.3 & 6.2 & 6.7 \\
      &     &                             & Min-$p$ &  6.7 & 6.5 & 7.2 & 7.1 & 7.0 \\
      &     &                             & Top-H &  \textbf{7.6} & \textbf{7.5} & \textbf{7.6} & \textbf{7.2} & \textbf{8.1} \\
\cmidrule{2-9}
  & \multirow{9}{*}{2.0}
    & \multirow{3}{*}{Prompt 1}
      & Top-$p$ &  6.0 & 5.4 & 3.8 & 3.6 & 4.4 \\
      &     &                             & Min-$p$ &  6.8 & 6.6 & 7.0 & 6.0 & 6.8 \\
      &     &                             & Top-H &  \textbf{7.4} & \textbf{7.4} & \textbf{8.0} & \textbf{6.8} & \textbf{7.2} \\
    \cmidrule{3-9}
    &     & \multirow{3}{*}{Prompt 2}
      & Top-$p$ &  \textbf{7.6} & \textbf{8.0} & 4.6 & 5.8 & 7.0 \\
      &     &                             & Min-$p$ &  6.2 & 6.3 & 7.8 & 5.8 & 5.9 \\
      &     &                             & Top-H &  7.5 & 7.8 & \textbf{8.1} & \textbf{7.0} & \textbf{7.4} \\
    \cmidrule{3-9}
    &     & \multirow{3}{*}{Prompt 3}
      & Top-$p$ &  \textbf{7.4} & 7.1 & 4.3 & 5.5 & 6.3 \\
      &     &                             & Min-$p$ &  6.6 & 6.5 & 5.5 & 6.4 & 6.9 \\
      &     &                             & Top-H &  7.3 & \textbf{7.4 }& \textbf{8.4} & \textbf{8.2} & \textbf{8.3} \\
\bottomrule
\end{tabular}
}
\vspace{4mm}
\caption{Evaluation metrics and the judge scores (on a scale of 1.0 to 10.0) for different LLMs, temperatures, prompts, and sampling methods. M1-M5 denote creativity, originality, narrative flow, imagery, and vitality, respectively.}
\label{tab:llm_judge_append}
\end{table}


\subsection{Validation with large model}
We replicated the MT-Bench (Table \ref{R2T1}) and GPQA (Table \ref{R2T2}) experiments using \textbf{LLaMA3.3-70B-Instruct} with the setup of Section~6.1 in the paper. Specifically, top-H outperforms min-$p$ by up to 6.0\% and 6.47\% on MT-Bench and GPQA, respectively.

\renewcommand{\arraystretch}{1.4}
\begin{table}[h!]
\centering
\resizebox{0.7\textwidth}{!}{%
\begin{tabular}{c|c|c|c}
\hline
\textbf{Method} & \textbf{Temperature = 1.0} & \textbf{Temperature = 1.5} & \textbf{Temperature = 2.0} \\
\hline
Top-p   & 7.06  & 6.75 & 3.86 \\
Min-p   & 7.08  & 7.11 & 6.44 \\
Top-H   & \textbf{7.08} & \textbf{7.14} & \textbf{7.04} \\
\hline
\end{tabular}%
}
\vspace{4pt}
\caption{MT-Bench results with LLaMA3.3-70B-Instruct.}
\label{R2T1}
\end{table}

\renewcommand{\arraystretch}{1.4}
\begin{table}[h!]
\centering
\resizebox{0.7\textwidth}{!}{%
\begin{tabular}{c|c|c|c}
\hline
\textbf{Method} & \textbf{Temperature = 1.0} & \textbf{Temperature = 1.5} & \textbf{Temperature = 2.0} \\
\hline
Top-p   & 43.75 & 41.74 & 34.15 \\
Min-p   & 45.76 & 42.41 & 39.29 \\
Top-H   & \textbf{51.12} & \textbf{48.88} & \textbf{45.31} \\
\hline
\end{tabular}%
}
\vspace{4pt}
\caption{GPQA results with LLaMA3.3-70B-Instruct.}
\label{R2T2}
\end{table}

\noindent Additionally, with the large model, we produced results with the LLM-as-judge setup described in Section~6.1.3, reported in Table \ref{R2T3}. This demonstrates top-H’s consistent improvement trend over alternatives.


\begin{table}[h!]
\centering
\renewcommand{\arraystretch}{1.4}
\resizebox{\textwidth}{!}{%
\begin{tabular}{c|c|c|c|c|c|c}
\hline
\textbf{Temperature} & \textbf{Sampling Method} & \textbf{M1} & \textbf{M2} & \textbf{M3} & \textbf{M4} & \textbf{M5} \\
\hline
\multirow{3}{*}{1.0} & Top-p & 6.05 $\pm$ 0.24 & 6.10 $\pm$ 0.22 & 8.80 $\pm$ 0.20 & 7.15 $\pm$ 0.26 & 6.95 $\pm$ 0.22 \\
                         & Min-p & 7.05 $\pm$ 0.22 & 7.10 $\pm$ 0.22 & \textbf{8.85} $\pm$ 0.20 & \textbf{7.95} $\pm$ 0.15 & 8.00 $\pm$ 0.26 \\
                         & Top-H & \textbf{8.10} $\pm$ 0.26 & \textbf{8.85} $\pm$ 0.22 & 8.05 $\pm$ 0.22 & 7.60 $\pm$ 0.20 & \textbf{8.10} $\pm$ 0.24 \\
\hline
\multirow{3}{*}{1.5} & Top-p & 6.95 $\pm$ 0.22 & 7.80 $\pm$ 0.20 & \textbf{8.65} $\pm$ 0.15 & 8.35 $\pm$ 0.22 & 8.40 $\pm$ 0.20 \\
                         & Min-p & 7.10 $\pm$ 0.30 & 7.15 $\pm$ 0.22 & 8.05 $\pm$ 0.15 & 7.25 $\pm$ 0.26 & 7.15 $\pm$ 0.22 \\
                         & Top-H & \textbf{8.95} $\pm$ 0.20 & \textbf{8.90} $\pm$ 0.15 & 8.10 $\pm$ 0.20 & \textbf{9.00} $\pm$ 0.22 & \textbf{8.95} $\pm$ 0.22 \\
\hline
\multirow{3}{*}{2.0} & Top-p & 7.75 $\pm$ 0.22 & 8.15 $\pm$ 0.26 & 5.55 $\pm$ 0.22 & 7.60 $\pm$ 0.20 & 7.25 $\pm$ 0.26 \\
                         & Min-p & 8.15 $\pm$ 0.20 & 7.30 $\pm$ 0.35 & 6.25 $\pm$ 0.22 & \textbf{8.05} $\pm$ 0.24 & 6.80 $\pm$ 0.22 \\
                         & Top-H & \textbf{8.80} $\pm$ 0.20 & \textbf{8.20} $\pm$ 0.30 & \textbf{7.10} $\pm$ 0.26 & 8.00 $\pm$ 0.15 & \textbf{7.85} $\pm$ 0.20 \\
\hline
\end{tabular}%
}
\vspace{4pt}
\caption{LLM-as-judge results with LLaMA3.3-70B-Instruct.}
\label{R2T3}
\end{table}

\subsection{Human Eval}
We have conducted Human Evaluation of LLM-generated texts using a setup similar to that of min-$p$, and compared with top-$p$ and min-$p$. We recruited 14 PhD students for this. We used LLaMA3.1-8B-Instruct with texts generated using a prompt adapted from the min-$p$ framework: “Write me a creative story.” For each configuration, we generated three outputs, capped at 512 tokens. Participants were asked to evaluate them along quality and diversity with rating on a scale of 1–10. Results are shown in Table \ref{R1T2}.

\begin{table}[h!]
\centering
\resizebox{\textwidth}{!}{%
\renewcommand{\arraystretch}{1.4}
\begin{tabular}{c|c|c|c|c|c|c}
\hline
\textbf{Sampling Method} & \textbf{Creativity (T=0.7)} & \textbf{Coherence (T=0.7)} & \textbf{Creativity (T=1.0)} & \textbf{Coherence (T=1.0)} & \textbf{Creativity (T=2.0)} & \textbf{Coherence (T=2.0)} \\
\hline
Top-p & 7.57 $\pm$ 0.72 & 4.78 $\pm$ 0.93 & 6.28 $\pm$ 0.69 & 5.78 $\pm$ 0.79 & 3.57 $\pm$ 0.72 & 7.57 $\pm$ 0.62 \\
Min-p & 6.92 $\pm$ 0.59 & 5.28 $\pm$ 1.03 & 6.21 $\pm$ 0.77 & 5.92 $\pm$ 0.79 & 6.00 $\pm$ 0.75 & 6.57 $\pm$ 0.72 \\
Top-H & 7.35 $\pm$ 0.71 & 5.42 $\pm$ 0.62 & 6.57 $\pm$ 0.90 & 6.42 $\pm$ 0.91 & 6.42 $\pm$ 0.62 & 7.07 $\pm$ 0.70 \\
\hline
\end{tabular}%
}
\vspace{4pt}
\caption{Human evaluation results on creativity and coherence ratings.}
\label{R1T2}
\vspace{-16pt}
\end{table}

\begin{figure}[!t]
   \centering \includegraphics[width=0.8\linewidth]{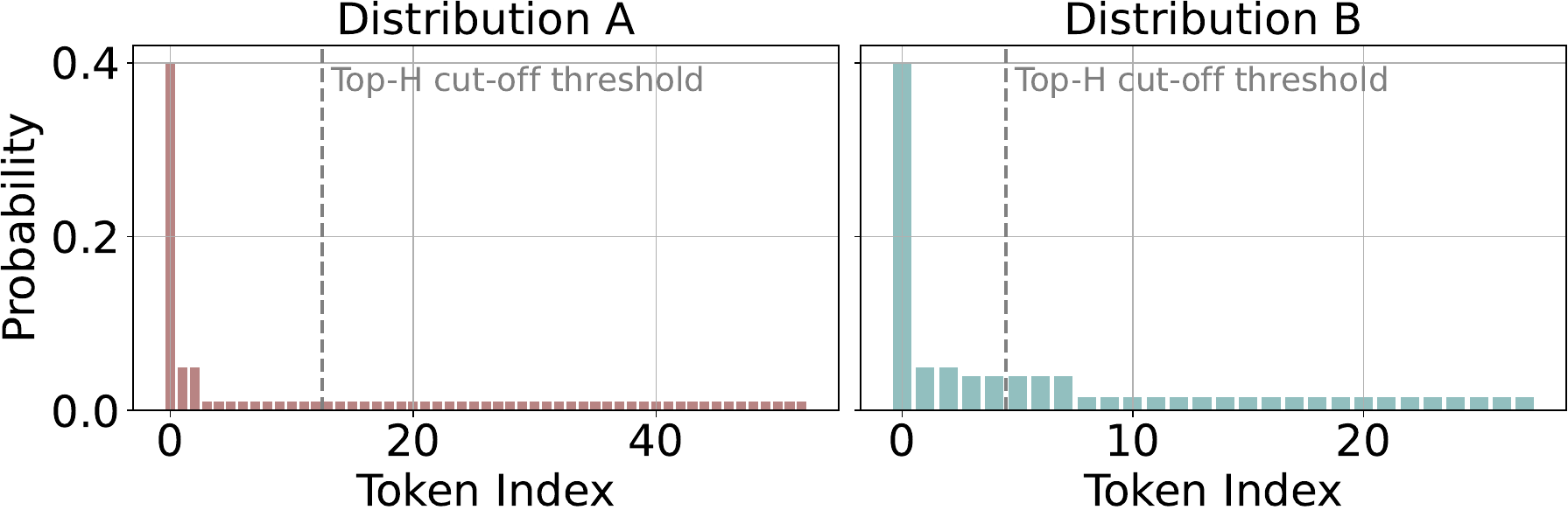}  
   \caption{Probability distribution of two different types with associated top-H thresholds.}
   \label{fig:top-h}
\end{figure}

\subsection{Top-H truncation threshold}
\label{append:toph-trunc} 
In this section, we demonstrate how top-H addresses the limitations of min-$p$ sampling, as illustrated in Fig.~\ref{fig:minp}, which served as our motivational case study. Comparing the two distributions, we observe that distribution A exhibits greater randomness, with a higher proportion of low-probability tokens relative to distribution B. This observation is supported by their entropy values: distribution A has an entropy of 4.28, while distribution B has a lower entropy of 3.71. Consequently, an optimal decoding strategy would be expected to allocate a larger sampling pool in scenario A, reflecting the model’s lower confidence. This is precisely how the top-H decoding method operates. When applied with \(\alpha = 0.4\), the resulting entropy thresholds are shown in Fig.~\ref{fig:top-h}. As illustrated, top-H assigns a significantly larger token set to distribution A to accommodate its higher uncertainty—an adjustment that min-$p$ fails to make. Moreover, in scenario B, top-H retains several high-probability tokens that min-$p$ erroneously excludes. Therefore, top-H effectively addresses both key shortcomings of min-$p$ sampling in such settings.

\subsection{Impact of the $\alpha$ Parameter}
\label{app:alpha_more_results}

Table \ref{tab:alpha_more_results} reports GPQA accuracy and the average sampling pool size across different $\alpha$ values. The experiment is done using the \texttt{LLaMA3.1-8B-Instruct} model with temperature $T = 1.5$. These results show that: 

\begin{enumerate}
    \item Larger $\alpha$ values slightly reduce accuracy, which aligns with the nature of GPQA’s graduate-level questions that benefit from more confident (less diverse) answers.
    \item Sampling pool size increases with $\alpha$, providing more generative options and supporting the creativity aspect observed in Figure \ref{fig:alpha}.
\end{enumerate}

\renewcommand{\arraystretch}{1.4}
\setlength{\tabcolsep}{4pt}

\begin{table}[h!]
\centering
\resizebox{\textwidth}{!}{%
\begin{tabular}{c|c|c|c|c|c|c|c|c|c|c|c|c|c|c|c|c|c}

\hline
$\alpha$ & 0.10 & 0.15 & 0.20 & 0.25 & 0.30 & 0.35 & 0.40 & 0.45 & 0.50 & 0.55 & 0.60 & 0.65 & 0.70 & 0.75 & 0.80 & 0.85 & 0.90 \\
\hline
Accuracy on GPQA & 0.3085 & 0.3085 & 0.3095 & 0.3085 & 0.3125 & 0.3103 & 0.3058 & 0.3050 & 0.2869 & 0.2937 & 0.2879 & 0.2790 & 0.2655 & 0.2879 & 0.2612 & 0.2656 & 0.2701 \\
Average sampling pool size & 1.01 & 1.03 & 1.20 & 1.48 & 1.90 & 2.53 & 3.48 & 4.74 & 6.94 & 9.11 & 11.79 & 15.77 & 21.35 & 27.99 & 36.28 & 47.06 & 59.28 \\
\hline
\end{tabular}
}
\vspace{4pt}
\caption{GPQA accuracy and average sampling pool size across different $\alpha$}
\label{tab:alpha_more_results}
\end{table}

\vspace{-5mm}

\subsection{Comparison of top-H to Mirostat method}


In addition to the results obtained with $\eta$-sampling, we include further comparisons with Mirostat \citep{basu2020mirostat} Table \ref{tab:miro-mtbench} for MTBench and Table \ref{tab:miro-gpqa} for GPQA, serving as an additional entropy-aware baseline. Unless otherwise specified, all decoding and evaluation configurations follow those in the paper. For Mirostat, we set the target entropy parameter to $\tau = 3$.

\renewcommand{\arraystretch}{1.4}
\setlength{\tabcolsep}{4pt}


\begin{table}[!htbp]
\centering
\resizebox{0.7\textwidth}{!}{%
\scriptsize
\renewcommand{\arraystretch}{1.4}
\begin{tabular}{c|cc|cc|cc}
\hline
\multirow{2}{*}{\textbf{Temperature}} 
& \multicolumn{2}{c|}{\textbf{LLaMA3.1-8B-Instruct}} 
& \multicolumn{2}{c|}{\textbf{Phi-3-Mini}} 
& \multicolumn{2}{c}{\textbf{Qwen2.5 3B}} \\
\cline{2-7}
 & \textbf{Top-H} & \textbf{Mirostat}
 & \textbf{Top-H} & \textbf{Mirostat}
 & \textbf{Top-H} & \textbf{Mirostat} \\
\hline
1.0 
& \textbf{6.788} & 6.375
& \textbf{6.819} & 6.600
& \textbf{5.956} & 5.369\\ 

1.5 
& \textbf{6.819} & 5.594
& \textbf{6.556} & 5.500
& \textbf{5.513} & 4.469\\ 

2.0 
& \textbf{6.438} & 5.519
& \textbf{6.056} & 5.269
& \textbf{4.519} & 4.256\\ 
\hline
\end{tabular}
}
\vspace{9pt}
\caption{MTBench results comparing Top-H and Mirostat. Top-H wins in all 9 settings. Averaged over all models and temperatures, Top-H achieves 6.163 vs.\ Mirostat 5.439 (+0.724 absolute, +13.3\%).}
\label{tab:miro-mtbench}
\vspace{-5mm}
\end{table}



\begin{table}[!htbp]
\centering
\resizebox{0.7\textwidth}{!}{%
\scriptsize
\renewcommand{\arraystretch}{1.4}
\begin{tabular}{c|cc|cc|cc}
\hline
\multirow{2}{*}{\textbf{Temperature}} 
& \multicolumn{2}{c|}{\textbf{LLaMA3.1-8B-Instruct}} 
& \multicolumn{2}{c|}{\textbf{Phi-3-Mini}} 
& \multicolumn{2}{c}{\textbf{Qwen2.5 3B}} \\
\cline{2-7}
 & \textbf{Top-H} & \textbf{Mirostat}
 & \textbf{Top-H} & \textbf{Mirostat}
 & \textbf{Top-H} & \textbf{Mirostat} \\
\hline
1.0 
& 29.24 & \textbf{30.36}   
& \textbf{32.37} & 30.13   
& \textbf{28.79} & 28.35 \\ 

1.5 
& \textbf{30.58} & 25.67   
& \textbf{30.80} & 29.02   
& \textbf{27.90} & 26.34 \\ 

2.0 
& \textbf{28.79} & 28.79   
& \textbf{30.80} & 29.91   
& \textbf{28.12} & 27.79 \\ 
\hline
\end{tabular}
}
\vspace{9pt}
\caption{GPQA results comparing Top-H and Mirostat. Top-H outperforms Mirostat in 7 of 9 settings, with one Mirostat win at LLaMA-8B (T = 1.0). Averaged over all models and temperatures, Top-H achieves 29.71 vs.\ Mirostat 28.48 (+1.23 absolute, +4.3\%).}
\label{tab:miro-gpqa}
\vspace{-5mm}
\end{table}


\section{Limitations}
\label{limit}
In this paper, we introduced a novel sampling method—top-H—as a greedy solution to the NP-hard \emph{entropy-constrained mass maximization} (ECMM) problem. While top-H demonstrates strong empirical performance, it does not provide general competitive guarantees that apply across a broad range of distributions.  
Moreover, the hyperparameter $\alpha$ was tuned manually, even though the method exhibits robustness to its variation. Designing an algorithm that offers a provable approximation ratio and can dynamically adapt the entropy threshold $\alpha$ remains an important direction for future work.

\section{Broader Impact}
\label{broad}
Top-H sampling enhances the coherence and creativity of text generated by large language models, especially at high temperatures. This can positively impact applications such as creative writing, education, and human-AI interaction by making outputs more diverse and engaging. Its efficiency and ease of integration also support broader accessibility in open-source settings. However, the same improvements in fluency could be misused to generate more persuasive disinformation or evade content moderation. While top-H is a general-purpose sampling method, we recommend pairing it with safety mechanisms and monitoring in sensitive deployments. Open-sourcing our implementation and providing clear usage guidelines will support responsible adoption and further research.

\end{document}